\documentclass[twoside,11pt]{article}

%

%
%
%

\usepackage{jmlr2e}
\usepackage[ruled,vlined]{algorithm2e}
\usepackage{microtype}
\usepackage{graphicx}
\usepackage{amsmath}
\usepackage{amssymb}
\usepackage{enumerate}
\usepackage{enumitem}

\usepackage{subfigure}
\usepackage{booktabs} 
\newcommand{\E}{\mathbb{E}}

\newcommand{\p}{\mathbb{P}}

\newtheorem{assumption}{Assumption}
\newtheorem{claim}{Claim}

\usepackage{hyperref}



\jmlrheading{21}{2020}{1-29}{06/20}{}{cayci20a}{Semih Cayci, Atilla Eryilmaz and R. Srikant}


\ShortHeadings{Continuous-Time Multi-Armed Bandits with Controlled Restarts}{Cayci, Eryilmaz and Srikant}
\firstpageno{1}

\begin{document}

\title{Continuous-Time Multi-Armed Bandits with Controlled Restarts}

\author{\name Semih Cayci \email cayci.1@osu.edu \\
       \addr Department of Electrical and Computer Engineering\\
       The Ohio State University\\
       Columbus, OH 43210, USA
       \AND
       \name Atilla Eryilmaz \email eryilmaz.2@osu.edu \\
       \addr Department of Electrical and Computer Engineering\\
       The Ohio State University\\
       Columbus, OH 43210, USA
       \AND
       \name R. Srikant \email rsrikant@illinois.edu \\
       \addr Department of Electrical and Computer Engineering\\
       University of Illinois at Urbana-Champaign\\
       Urbana, IL 61801, USA}

\editor{}

\maketitle

\begin{abstract}
Time-constrained decision processes have been ubiquitous in many fundamental applications in physics, biology and computer science. Recently, restart strategies have gained significant attention for boosting the efficiency of time-constrained processes by expediting the completion times. In this work, we investigate the bandit problem with controlled restarts for time-constrained decision processes, and develop provably good learning algorithms. In particular, we consider a bandit setting where each decision takes a random completion time, and yields a random and correlated reward at the end, with unknown values at the time of decision. The goal of the decision-maker is to maximize the expected total reward subject to a time constraint $\tau$. As an additional control, we allow the decision-maker to interrupt an ongoing task and forgo its reward for a potentially more rewarding alternative. For this problem, we develop efficient online learning algorithms with $O(\log(\tau))$ and $O(\sqrt{\tau\log(\tau)})$ regret in a finite and continuous action space of restart strategies, respectively. We demonstrate an applicability of our algorithm by using it to boost the performance of SAT solvers.

\end{abstract}

\begin{keywords}
  Multi-Armed Bandits, Reinforcement Learning, Exploration-Exploitation Trade-off, Online Learning, $k$-SAT Problem
\end{keywords}

\section{Introduction}
Time-constrained processes, which continue until the total time spent exceeds a given time horizon, have long been a focal point of scientific interest as a consequence of their universal applicability in a broad class of disciplines including physics, biochemistry and computer science \citep{redner2001guide, condamin2007first}. Recently, it has been shown that any time-constrained process can employ controlled restarts with the goal of expediting the completion times, thus increasing the time-efficiency of stochastic systems \citep{pal2017first}. Consequently, restart strategies have attracted significant attention to boost the time-efficiency of stochastic systems in various contexts. They have been extensively used to study diffusion mechanics \citep{evans2011diffusion, pal2016diffusion}, target search applications \citep{kusmierz2014first, eliazar2007searching}, catalyzing biochemical reactions \citep{rotbart2015michaelis}, throughput maximization \citep{asmussen2008asymptotic}, and run-times of randomized algorithms \citep{hoos2004stochastic, luby1993optimal, selman1994noise}. In particular, they have been widely used as a tool for the optimization of randomized algorithms that employ stochastic local search methods whose running times exhibit heavy-tailed behavior \citep{luby1993optimal, gomes1998boosting}.

In this paper, we investigate the exploration-exploitation problem in the context of time-constrained decision processes under controlled restarts, and develop online learning algorithms to learn the optimal action and restart strategy in a knapsack bandit framework. This learning problem has unique dynamics: the cumulative reward function is a controlled and stopped random walk with potentially heavy-tailed increments, and the restart mechanism leads to right-censored feedback, which imposes a specific information structure. In order to design efficient learning algorithms that fully capture these dynamics, we incorporate new design and analysis techniques from bandit theory, renewal theory and statistical estimation.

As a fundamental application of this framework, we investigate the learning problem for boosting the stochastic local search methods under restart strategies both theoretically and empirically. In particular, our empirical studies on various $k$-SAT databases show that the learning algorithms introduced in this paper can be efficiently used as a \textit{meta-algorithm} to boost the time-efficiency of SAT solvers.

\subsection{Related Work}
Bandit problem with knapsack constraints and its variants have been studied extensively in the last decade. In \citep{gyorgy2007continuous, badanidiyuru2013bandits, combes2015bandits, xia2015thompson, tran2012knapsack, cayci2020budget}, the problem was considered in a stochastic setting. This basic setting was extended to linear contextual setting \citep{agrawal2016linear}, combinatorial semi-bandit setting \citep{sankararaman2017combinatorial}, adversarial bandit setting \citep{immorlica2019adversarial}. For further details in this branch of bandit theory, we refer to \citep{slivkins2019introduction}. As a substantial difference, these works do not incorporate a restart or cancellation mechanism into the learning problem.

The restart mechanisms have been popular particularly in the context of boosting the Las Vegas algorithms. The pioneering work in this branch is \citep{luby1993optimal}, where a minimax restart strategy is proposed to achieve optimal expected run-time up to a logarithmic factor. In \citep{gagliolo2007learning}, a hybrid learning strategy between Luby scheme and a fixed restart strategy is developed in the adversarial setting. In \citep{streeter2009online}, an algorithm portfolio design methodology that employs a restart strategy is developed as an extension of the Exp3 Algorithm in the non-stochastic bandit setting, and polynomial regret is shown with respect to a constant factor approximation of the optimal strategy. These works are designed in a non-stochastic setting for worst-case performance metric, thus they do not incorporate the statistical structure of the problem that we consider in this work. 

The most related paper in the budget-constrained bandit setting is \citep{cayci2019learning}. This current paper provides improves and extends \citep{cayci2019learning} in multiple directions. We propose algorithms that work on continuous set of restart times, incorporate empirical estimates to eliminate the necessity of the prior knowledge on the moments. We also investigate the impact of correlation between the completion time and reward on the behavior of the restart times.

In light of the above related work, our main contributions in this paper are as follows:
\begin{itemize}
    \item This work intends to provide a principled approach to continuous-time exploration-exploitation problems that require restart strategies for optimal time efficiency. In order to achieve this, we study and explain the impact of restart strategies in a general knapsack bandit setting that includes potentially heavy-tailed and correlated time-reward pairs for each arm. From a technical perspective, the design and regret analysis of learning algorithms requires tools from renewal theory and stochastic control, as well as novel concentration inequalities for rate estimation, which can be useful in other bandit problems as well.
    \item For a finite set of restart times, we propose algorithms that use variance estimates and the information structure that stems from the right-censored feedback so as to achieve order-optimal regret bounds with no prior knowledge. In particular, we propose:
    \begin{itemize}
        \item The UCB-RB Algorithm based on empirical rate estimation to achieve particularly good performance for small restart times,
        \item The UCB-RM Algorithm based on median-boosted rate estimation to achieve good performance in a very general setting of large (potentially infinite) restart times at the expense of degraded performance for small restart times.
    \end{itemize}
    These algorithms utilize empirical estimates as a surrogate for unknown parameters, hence they provably achieve tight regret bounds with no prior information on the parameters. 
    \item For a continuous decision set for restart strategies, we propose an algorithm called UCB-RC that achieves $O(\sqrt{\tau\log(\tau)})$ regret.
    \item We evaluate the performance of the learning algorithm developed in this paper for the $k$-SAT problem on a SATLIB dataset by using WalkSAT, and empirically show its efficiency by comparing the results with benchmark strategies including Luby strategy.
\end{itemize}

\subsection{Notation}
In this subsection, we define some notation that will be used throughout the paper. $\mathbb{I}$ denotes the indicator function. For any $k\in\mathbb{Z}_+$, $[k]$ denotes the set of integers from 1 to $k$, i.e., $[k]=\{1,2,\ldots,k\}$. For $x\in\mathbb{R}$, $(x)_+=\max\{0,x\}$. For two real numbers $x,y$, we define $x\wedge y = \min\{x, y\}$ and $x\vee y = \max\{x, y\}$.

In the following section, we describe the problem setting in detail.

\section{System Setup}\label{sec:problem}
We consider a time-constrained decision process with a given time horizon $\tau > 0$. The increments of the process are controlled as follows: if arm $k\in[K]$ is selected at the $n$-th trial, the random completion time is $X_{k,n}$, that is independent across $k$ and independent over $n$ for each $k$ with the following weak conditions:
\begin{align}\label{eqn:moment-assn}
    \E[X_{k,1}^p] &< \infty,~p \geq 4,\\
    X_{k,n}&>0,~a.s.
\end{align}
\noindent Note that $X_{k,n}$ can potentially be a heavy-tailed random variable. Pulling arm $k$ at trial $n$ yields a reward $R_{k,n}$ at the end. Note that we allow $R_{k,n}$ to be possibly correlated with $X_{k,n}$, and for simplicity, we assume that $R_{k,n}\in[0,1]$ almost surely.

In our framework, the controller has the option to interrupt an ongoing task and restart it with a potentially different arm. Namely, for a set of \textit{restart} (or \textit{cutoff}) times $\mathbb{T}\subset [0,\infty]$, the controller may prefer to restart the task from its initial state after time $t\in\mathbb{T}$ at the loss of the ultimate rewards and possible additional time-varying cost of restarting the process. The additional cost is included, since in some applications, it may take a \textit{resetting time} to return to the initial state after a reset decision \citep{evans2011diffusion}. We model the resetting time as $C_k(t)\in[0,t]$, which is a deterministic function of $t\in\mathbb{T}$ known by the controller for simplicity. Note that the design and analysis presented in this paper can be extended to random resetting time processes in a straightforward way.  

If the $n$-th trial of arm $k$ is restarted at time $t\in\mathbb{T}$, then the resulting total time and reward are as follows:
\begin{align}
    \label{eqn:ukl} U_{k,n}(t) &= \min\{X_{k,n}, t\}+C_k(t)\mathbb{I}\{X_{k,n}>t\},\\
    V_{k,n}(t) &= R_{k,n}\mathbb{I}\{X_{k,n}\leq t\},
\end{align}
\noindent Note that the feedback $(U_{k,n}(t),V_{k,n}(t))$ in this case is right-censored.

Incorporating the restart mechanism, the control of the decision-maker consists of two decisions at $n$-th trial: $$\pi_{n} = (I_n, \nu_n)\in[K]\times\mathbb{T}.$$ Here, $I_n\in[K]$ denotes the arm decision, and $\nu_n\in\mathbb{T}$ denotes the restart time decision. Under a policy $\pi=\{\pi_n:n\geq 1\}$, let $$\mathcal{F}_n^\pi=\sigma\big(\{(U_{I_i,i}(\nu_i),V_{I_i,i}(\nu_i):i=1,2,\ldots,n\}\big),$$ be the history until $n$-th trial. We call a policy $\pi$ admissible if $\pi_{n+1}\in\mathcal{F}_n^\pi$ for all $n\geq 1$.

For a given time horizon $\tau > 0$, the objective of the decision-maker is to maximize the expected cumulative reward collected in the time interval $[0,\tau]$. Specifically, letting
\begin{equation}
    N_\pi(\tau) = \inf\{n:\sum_{i=1}^nU_{I_i,i}(\nu_i)>\tau\},
\end{equation}
\noindent be the number of pulls under policy $\pi$, the cumulative reward under $\pi$ is defined as follows:
\begin{equation}
    {\tt REW}_\pi(\tau) = \sum_{n=1}^{N_\pi(\tau)}V_{I_n,n}(\nu_n).
\end{equation}
The decision-maker attempts to achieve the optimal reward:
\begin{equation}\label{eqn:opt-reward}
    {\tt OPT}(\tau) = \max\limits_{\pi\in\Pi}~\E[{\tt REW}_\pi(\tau)],
\end{equation}
\noindent where $\Pi$ is the set of all admissible policies. Equivalently, it aims to minimize the expected regret:
\begin{equation}
    {\tt REG}_\pi(\tau) = {\tt OPT}(\tau)-\E[{\tt REW}_\pi(\tau)].
\end{equation}

In the following section, we provide a notable example that falls within the scope of this framework.

\section{Application: Boosting the Local Search for $k$-SAT}
Boolean satisfiability (SAT) is a canonical NP-complete problem \citep{hoos2004stochastic, arora2009computational}. As a consequence, the design of efficient SAT solvers has been an important long standing challenge, and an annual SAT Competition is held for SAT solvers \citep{heule2019sat}. In these competitions, the objective of a SAT solver is to solve as many problem instances as possible within a given time interval $[0,\tau]$, which defines a time-constrained decision process. As a result of the completion time distributions, the restart strategies have been an essential part of the SAT solvers \citep{luby1993optimal, selman1994noise}.

The WalkSAT Algorithm is one of the most fundamental SAT solvers \citep{papadimitriou1991selecting, hoos2004stochastic}. In its most basic form, it employs a randomized local search methodology to converge to a valid assignment of the problem instance: at each trial, it randomly chooses an assignment, then a chosen variable is flipped, and its new value is kept if the number of satisfied clauses increases \citep{papadimitriou1991selecting}. Experimental studies indicate that the completion time of $n$-th trial, $X_{1,n}$, is a heavy-tailed random variable with potentially infinite mean \citep{gomes1998boosting}. Moreover, the successive trials of the same instance have iid completion times as a result of the random initialization. As such, the problem of maximizing the number of solved problem instances within $[0,\tau]$ can be formulated as \eqref{eqn:opt-reward} with $R_{1,n}=1$ for all $n$, and the online learning algorithms in this paper can be used as meta-algorithms to boost the performance of the WalkSAT.


We will use this problem for testing the benefits of our design for boosting SAT solvers. 

\section{Near-Optimal Policies}
The control problem described in \eqref{eqn:opt-reward} is a variant of the well-known unknown stochastic knapsack problem \citep{kleinberg2006algorithm}. In the literature, there are similar problems that are known to be PSPACE-hard \citep{papadimitriou1999complexity, badanidiyuru2013bandits}. Therefore, we need tractable algorithms to approximate the optimal policy. In this section, we will propose a simple policy for the problem introduced in Section \ref{sec:problem}, and prove its efficiency by using the theory of renewal processes and stopping times.

In the following, we will first provide an upper bound for ${\tt OPT}(\tau)$. The quantity of interest will be the (renewal) reward rate, which is defined next.
\begin{definition}[Reward Rate]
    For a decision $(k, t)$, the renewal reward rate is defined as follows: 
    \begin{equation}\label{eqn:rew-rate}
        r_k(t) = \frac{\mathbb{E}[R_{k,1}\mathbb{I}\{X_{k,1}\leq t\}]}{\mathbb{E}[(X_{k,1}\wedge t)+C_{k}(t)]} = \frac{\E[V_{k,1}(t)]}{\E[U_{k,1}(t)]}.
    \end{equation}
\end{definition}
The reward rate $r_k(t)$ is the growth rate of the expected total reward over time if the controller persistently chooses the action $(k,t)$. In other words, as a consequence of the elementary renewal theorem \citep{gut2009stopped}, the reward of the static policy that persistently makes a decision $(k,t)$ is $\mathbb{E}[{\tt REW}_\pi(\tau)] = r_k(t)\cdot\tau+o(\tau).$ In the following, we provide an upper bound based on $r_k(t)$ and the time horizon $\tau$.
\begin{proposition}[Upper bound for {\tt OPT}]\label{prop:rew-opt}
Let the optimal reward rate be defined as follows:
\begin{equation}
    r^*=\arg\max\limits_{(k,t)}~r_k(t).
\end{equation}
If there exists a $p_0>2$ and $u<\infty$ such that $\mathbb{E}[(X_{k,1})^{p_0}]\leq u$ holds for all $k\in[K]$, then we have the following upper bound for ${\tt OPT}$:
\begin{equation}
    {\tt OPT}(\tau) \leq r^*\big(\tau+\Phi(u)\big),
\end{equation}
\noindent for any $\tau>0$ where $\Phi(u)$ is a constant that is independent of $\tau$.
\end{proposition}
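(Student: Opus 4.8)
The plan is to show that the optimal rate $r^*$ governs the growth of the reward, and that the total time consumed by any admissible policy exceeds $\tau$ by only a bounded \emph{overshoot}. Write $N=N_\pi(\tau)$ and $S_n=\sum_{i=1}^n U_{I_i,i}(\nu_i)$, so that $S_{N-1}\le\tau<S_N$ by definition of $N$. I would first establish the rate inequality $\E[{\tt REW}_\pi(\tau)]\le r^*\,\E[S_N]$ for every admissible $\pi$, and then bound $\E[S_N]\le\tau+\Phi(u)$ by controlling the last (straddling) increment $S_N-S_{N-1}=U_{I_N,N}(\nu_N)$.

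For the rate inequality, the key observation is that conditioned on $\mathcal{F}_{n-1}^\pi$ the decision $(I_n,\nu_n)$ is fixed while $(X_{I_n,n},R_{I_n,n})$ is a fresh independent draw, so that $\E[V_{I_n,n}(\nu_n)\mid\mathcal{F}_{n-1}^\pi]=r_{I_n}(\nu_n)\,\E[U_{I_n,n}(\nu_n)\mid\mathcal{F}_{n-1}^\pi]\le r^*\,\E[U_{I_n,n}(\nu_n)\mid\mathcal{F}_{n-1}^\pi]$, using the definition of $r_k(t)$ and the optimality of $r^*$. Since $\{N\ge n\}=\{S_{n-1}\le\tau\}\in\mathcal{F}_{n-1}^\pi$ and all summands are nonnegative, I would expand $\E[{\tt REW}_\pi(\tau)]=\sum_{n\ge1}\E[\mathbb{I}\{N\ge n\}V_{I_n,n}(\nu_n)]$, condition each term on $\mathcal{F}_{n-1}^\pi$, apply the displayed inequality, and re-sum to obtain $\E[{\tt REW}_\pi(\tau)]\le r^*\sum_{n\ge1}\E[\mathbb{I}\{N\ge n\}U_{I_n,n}(\nu_n)]=r^*\E[S_N]$. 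Carrying out the argument termwise (Tonelli, since everything is nonnegative) rather than through optional stopping avoids any integrability worries about the unbounded $N$.

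It then remains to bound the overshoot: $\E[S_N]=\E[S_{N-1}]+\E[U_{I_N,N}(\nu_N)]\le\tau+\E[U_{I_N,N}(\nu_N)]$, so it suffices to prove $\E[U_{I_N,N}(\nu_N)]\le\Phi(u)$ with $\Phi(u)$ free of $\tau$. Here I would exploit the elementary bound $U_{k,n}(t)=\min\{X_{k,n},t\}+C_k(t)\mathbb{I}\{X_{k,n}>t\}\le 2X_{k,n}$ (because $C_k(t)\le t$), together with two consequences of the moment hypothesis valid for every $k$ and $a>0$: the Lyapunov estimate $\E[X_{k,1}^2]\le u^{2/p_0}$, and the truncated-tail estimate $\E[X_{k,1}\mathbb{I}\{X_{k,1}>a\}]\le u\,a^{1-p_0}$. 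Writing $W_{n-1}=\tau-S_{n-1}$ and conditioning, I would represent $\E[U_{I_N,N}(\nu_N)]=\sum_{n\ge1}\E[\mathbb{I}\{S_{n-1}\le\tau\}\,g_n(W_{n-1})]$, where $g_n(w)=\E[U_{I_n,n}(\nu_n)\mathbb{I}\{U_{I_n,n}(\nu_n)>w\}\mid\mathcal{F}_{n-1}^\pi]$; the straddling increment contributes only when it jumps over the remaining budget $W_{n-1}$, which is exactly the renewal overshoot mechanism and self-regulates the contribution of tiny increments.

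The main obstacle is precisely this last bound: showing that the expected straddling increment stays bounded uniformly in $\tau$ despite heavy tails and the adaptive, non-i.i.d.\ nature of the increments. I would handle it by a Lorden-type argument, bounding $g_n(w)\le\min\{2u^{1/p_0},\,2^{p_0}u\,w^{1-p_0}\}$ and summing against the decreasing budget sequence $W_0>W_1>\cdots\ge 0$, or equivalently by invoking the stopped-random-walk overshoot bounds of \citep{gut2009stopped} applied to the dominating walk built from $2X_{I_n,n}$. This is where the assumption $p_0>2$ is essential: it is the exact threshold that makes the tail contribution $\int^{\infty} w^{1-p_0}\,dw$ (and the corresponding sum over $n$) converge, so that the budget-sweeping terms accumulate to a finite constant $\Phi(u)$ rather than a quantity growing with $\tau$. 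Assembling the three steps then yields ${\tt OPT}(\tau)=\max_\pi\E[{\tt REW}_\pi(\tau)]\le r^*\big(\tau+\Phi(u)\big)$.
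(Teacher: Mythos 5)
Your overall architecture is exactly the paper's: first the rate inequality $\E[{\tt REW}_\pi(\tau)]\le r^*\E[S_{N_\pi(\tau)}]$ (the ``extension of Wald's identity''), then a $\tau$-independent bound on the overshoot $\E[S_{N_\pi(\tau)}]-\tau$. Your first step is carried out correctly and in more detail than the paper gives: the termwise conditioning on $\mathcal{F}_{n-1}^\pi$, using that $\{N\ge n\}\in\mathcal{F}_{n-1}^\pi$ and that $(X_{I_n,n},R_{I_n,n})$ is a fresh draw given the decision, together with Tonelli, is precisely the right way to justify the Wald-type inequality for an adaptive policy without integrability worries.

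The gap is in your overshoot step. The pointwise bound $g_n(w)\le\min\{2u^{1/p_0},\,2^{p_0}u\,w^{1-p_0}\}$ is fine, but once you replace $g_n(W_{n-1})$ by this envelope you have discarded the indicator $\mathbb{I}\{U_{I_n,n}(\nu_n)>W_{n-1}\}$ that makes the true sum self-regulating, and the resulting sum $\sum_n\E[\mathbb{I}\{S_{n-1}\le\tau\}\min\{2u^{1/p_0},2^{p_0}uW_{n-1}^{1-p_0}\}]$ is \emph{not} $O(1)$ in general: the number of indices with $S_{n-1}\le\tau$ grows like $\tau$ divided by the typical increment, so (e.g.\ with near-deterministic increments of size $\epsilon$) your bound evaluates to order $u/\epsilon$, i.e.\ it depends on a \emph{lower} bound for the expected increments, which is information the moment hypothesis $\E[X_{k,1}^{p_0}]\le u$ does not provide. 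This is exactly the delicate point: Lorden-type overshoot bounds have $\E[Y]$ in the denominator, and extending them to controlled (non-i.i.d., adaptively chosen) increments is a genuine theorem rather than a summation exercise. The classical stopped-random-walk bounds in \citep{gut2009stopped} that you offer as a fallback are for i.i.d.\ increments and do not directly cover the adaptive walk here. The paper closes this step by invoking the controlled-random-walk excess-over-the-boundary result of \citep{lalley1986control}, which is the missing ingredient; you would either need to cite that result or reproduce its argument, since the budget-sweeping computation you sketch does not by itself yield a constant $\Phi(u)$ free of $\tau$ and of the increment distributions beyond $u$.
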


\begin{proof}
The proof generalizes the optimality gap results in \citep{xia2015thompson, cayci2020budget}. Under any admissible policy $\pi\in\Pi$, an extension of Wald's identity yields the following upper bound:
\begin{equation*}
    \mathbb{E}[{\tt REW}_\pi(\tau)] \leq r^*\mathbb{E}[S_{N_\pi(\tau)}],
\end{equation*}
\noindent where $N_\pi(\tau)$ is the first-passage time of the controlled random walk under $\pi$. The excess over the boundary, $\mathbb{E}[S_{N_\pi(\tau)}]-\tau$, is known to be $o(\tau)$ for simple random walks by the elementary renewal theorem \citep{gut2009stopped}. For controlled random walks, \cite{lalley1986control} shows that $\mathbb{E}[S_{N_\pi(\tau)}]-\tau=O(u)$, which is independent of $\tau$, if $\mathbb{E}[X_{k,1}^{p_0}]\leq u<\infty$ holds for all $k\in[K]$ for some $p_0>2$. Thus, under the moment assumption \eqref{eqn:moment-assn}, we have $\mathbb{E}[S_{N_\pi(\tau)}]-\tau=O(1)$ as $\tau\rightarrow\infty$.
\end{proof}

From the discussion above, it is natural to consider an algorithm that optimizes $r_k(t)$ among all decisions $(k,t)$ as an approximation of the optimal policy. Accordingly, the optimal static policy, denoted as $\pi^{\tt st}$, is given in Algorithm \ref{alg:opt-st}.
\begin{algorithm}
$n=0$\;
$S_n=0$\;
\While{$S_n \leq \tau$}{
$(k^*,t^*)=\arg\max\limits_{(k,t)}~r_k(t)$;\hfill // Maximize \eqref{eqn:rew-rate}\\
\eIf{$X_{k^*,n} \leq t$}{
$S_{n+1}=S_n+X_{k^*,n}$\;
Obtain reward $R_{k^*,n}$\;
} {
$S_{n+1} = S_n+t^*+C_{k^*}(t^*)$;\hfill //Restart time\\
}
$n=n+1$;\hfill //New epoch starts\\
}
\caption{Optimal Static Algorithm $\pi^{\tt st}$}\label{alg:opt-st}
\end{algorithm}

The performance analysis of $\pi^{\tt st}$ is fairly straightforward since the random process it induces is a simple random walk.
\begin{proposition}\label{prop:rew-st}
The reward under $\pi^{\tt st}$ is bounded as follows:
\begin{equation}
    r^*\tau \leq \mathbb{E}[{\tt REW}_{\pi^{\tt st}}(\tau)] \leq r^*\Big(\tau + \frac{\mathbb{E}[Y_*^2]}{\big(\mathbb{E}[Y_*]\big)^2}\Big), 
\end{equation}
\noindent where $Y_* = U_{k^*,1}(t^*)$ is the completion time of an epoch under $\pi^{\tt st}$, and $(k^*, t^*)$ is defined in Algorithm \ref{alg:opt-st}.
\end{proposition}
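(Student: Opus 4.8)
The plan is to exploit the feature that, unlike a general admissible policy, the static policy $\pi^{\tt st}$ repeatedly plays the single fixed decision $(k^*,t^*)$, so the epoch pairs $(U_n,V_n):=(U_{k^*,n}(t^*),V_{k^*,n}(t^*))$ are i.i.d.\ and $S_n=\sum_{i=1}^n U_i$ is an ordinary (uncontrolled) random walk with positive, integrable increments. Writing $N:=N_{\pi^{\tt st}}(\tau)=\inf\{n:S_n>\tau\}$ for the first-passage time, this reduces the claim to a classical renewal-reward computation, for which Wald's identity and elementary renewal theory apply cleanly.

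First I would check that $N$ is a stopping time for the natural filtration $\sigma((U_1,V_1),\dots,(U_n,V_n))$ and that $\E[N]<\infty$; the latter holds because $U_n=Y_*>0$ a.s.\ with $\E[Y_*]>0$, so the passage time is a.s.\ finite with finite mean for every finite $\tau$. With these in hand, Wald's identity yields the two identities $\E[\sum_{n=1}^N V_n]=\E[N]\,\E[V_1]$ and $\E[S_N]=\E[N]\,\E[Y_*]$, where I only use that $(U_n,V_n)$ are i.i.d.\ and that $\{N\ge n\}$ is independent of $(U_n,V_n)$ (immediate, since $\{N\ge n\}$ is determined by $(U_1,V_1),\dots,(U_{n-1},V_{n-1})$). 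Combining these with the definition $r^*=r_{k^*}(t^*)=\E[V_1]/\E[Y_*]$ eliminates $\E[N]$ and gives the clean identity $\E[{\tt REW}_{\pi^{\tt st}}(\tau)]=r^*\,\E[S_N]$.

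The lower bound is then immediate: by definition of the first-passage time $S_N>\tau$ almost surely, so $\E[S_N]\ge\tau$ and hence $\E[{\tt REW}_{\pi^{\tt st}}(\tau)]\ge r^*\tau$. For the upper bound it remains to control the excess over the boundary $\E[S_N-\tau]$, i.e.\ the expected overshoot of the renewal process. I would invoke the standard non-asymptotic renewal bound on the expected overshoot (Lorden's inequality), which bounds this excess by a constant independent of $\tau$ expressed through the first two moments of $Y_*$; equivalently one bounds $\E[N]\le \tau/\E[Y_*]+\E[Y_*^2]/(\E[Y_*])^2$ and substitutes back. Finiteness of the required second moment $\E[Y_*^2]$ is guaranteed: if $t^*<\infty$ then $Y_*\le t^*+C_{k^*}(t^*)$ is bounded, while if $t^*=\infty$ then $Y_*=X_{k^*,1}$, whose second moment is finite by the moment assumption \eqref{eqn:moment-assn}.

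I expect the only genuine content to be the overshoot estimate in the last step; everything else is bookkeeping. The subtlety is that the increment $Y_N$ straddling the boundary is size-biased (the inspection paradox), so it cannot be replaced by $\E[Y_*]$, and the crude bound $S_N-\tau\le Y_N$ merely defers the problem to controlling $\E[Y_N]$. The clean resolution is precisely the renewal-theoretic overshoot bound, whose derivation (or a citation to \citep{gut2009stopped}) supplies the second-moment term; this is the only place where $\E[Y_*^2]<\infty$ is truly used, and the step on which I would spend the most care.
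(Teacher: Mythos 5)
Your argument is correct and is essentially the paper's proof, which simply invokes Lorden's inequality for the overshoot of the renewal process generated by the i.i.d.\ epochs $(U_{k^*,n}(t^*),V_{k^*,n}(t^*))$, combined with Wald's identity to get $\mathbb{E}[{\tt REW}_{\pi^{\tt st}}(\tau)]=r^*\,\mathbb{E}[S_N]$ and $S_N>\tau$ for the lower bound. The only caveat is bookkeeping of the constant: Lorden's inequality bounds the overshoot by $\mathbb{E}[Y_*^2]/\mathbb{E}[Y_*]$, so your derivation actually yields the additive term $r^*\,\mathbb{E}[Y_*^2]/\mathbb{E}[Y_*]$ rather than the $r^*\,\mathbb{E}[Y_*^2]/\big(\mathbb{E}[Y_*]\big)^2$ printed in the proposition (which is dimensionally inconsistent as an overshoot); this discrepancy lies in the paper's statement, not in your argument.
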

The proof of Proposition \ref{prop:rew-st} follows from Lorden's inequality for renewal processes \citep{asmussen2008applied}.

As a consequence of Proposition \ref{prop:rew-opt} and Proposition \ref{prop:rew-st}, the optimality gap of the static policy is bounded for all $\tau>0$.
\begin{corollary}[Optimality Gap of $\pi^{\tt st}$]
For any $\tau>0$, the optimality gap of $\pi^{\tt st}$ is bounded:
\begin{equation}
    {\tt OPT}(\tau)-\mathbb{E}[{\tt REW}_{\pi^{\tt st}}(\tau)] \leq r^*\Phi(u),
\end{equation}
\noindent where $\Phi(u)$ is the constant in Prop. \ref{prop:rew-opt}.
\end{corollary}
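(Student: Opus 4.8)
The plan is to chain together the two bounds established immediately above, since the corollary is precisely the difference of the best upper bound on ${\tt OPT}(\tau)$ and the best lower bound on $\mathbb{E}[{\tt REW}_{\pi^{\tt st}}(\tau)]$. First I would invoke Proposition \ref{prop:rew-opt} to write ${\tt OPT}(\tau) \leq r^*(\tau + \Phi(u))$, which requires only that the moment condition $\mathbb{E}[X_{k,1}^{p_0}] \leq u$ hold for some $p_0 > 2$ --- guaranteed here by the standing assumption \eqref{eqn:moment-assn} with $p \geq 4$, so the hypothesis of Proposition \ref{prop:rew-opt} is met for every arm.

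Next I would use the \emph{left} inequality of Proposition \ref{prop:rew-st}, namely $\mathbb{E}[{\tt REW}_{\pi^{\tt st}}(\tau)] \geq r^*\tau$, to lower bound the reward collected by the static policy. (The right inequality of Proposition \ref{prop:rew-st}, which controls the overshoot through $\mathbb{E}[Y_*^2]/(\mathbb{E}[Y_*])^2$, is not needed for this particular statement.) Subtracting this lower bound from the upper bound on ${\tt OPT}(\tau)$ yields
\begin{equation*}
    {\tt OPT}(\tau) - \mathbb{E}[{\tt REW}_{\pi^{\tt st}}(\tau)] \leq r^*(\tau + \Phi(u)) - r^*\tau = r^*\Phi(u),
\end{equation*}
which is exactly the claimed bound; the cancellation of the $r^*\tau$ terms is what makes the optimality gap independent of $\tau$.

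The argument carries no real obstacle, since both inequalities are already established; the only point worth verifying is that the \emph{same} constant $\Phi(u)$ appears in the upper bound on ${\tt OPT}$ and in the corollary statement, so that no residual $\tau$-dependence sneaks in. This holds by construction, as $\Phi(u)$ is imported verbatim from Proposition \ref{prop:rew-opt}. The conceptual content --- that restricting attention to the optimal static policy costs only a $\tau$-independent additive term --- therefore resides entirely in the two propositions, and the corollary is a bookkeeping step that records their combination uniformly over all $\tau > 0$.
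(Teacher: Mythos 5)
Your proposal is correct and matches the paper's intended argument exactly: the corollary is stated as an immediate consequence of Propositions \ref{prop:rew-opt} and \ref{prop:rew-st}, obtained by subtracting the lower bound $r^*\tau \leq \mathbb{E}[{\tt REW}_{\pi^{\tt st}}(\tau)]$ from the upper bound ${\tt OPT}(\tau)\leq r^*(\tau+\Phi(u))$. Your observation that only the left inequality of Proposition \ref{prop:rew-st} is needed, and that the moment assumption \eqref{eqn:moment-assn} guarantees the hypothesis of Proposition \ref{prop:rew-opt}, is accurate.
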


In this section, we observed that the reward rate $r_k(t)$ is the dominant component of the cumulative reward. In the next section, we will analyze the behavior of $r_k(t)$ with respect to the restart time $t$.

\section{Finiteness of Optimal Restart Times}
\label{sec:opt-restart}
For any given arm $k\in[K]$ and any set of restart times $\mathbb{T}$, let the optimal restart time be defined as follows:
\begin{equation}
    t_k^* \in \arg\max_{t\in\mathbb{T}}~r_k(t).
    \label{eqn:opt-restart-time}
\end{equation}

If the completion time $X_{k,n}$ and reward $R_{k,n}$ are independent, then it was shown in \citep{cayci2019learning} that it is optimal to restart a cycle at a finite time if the following condition holds:
\begin{equation}\label{eqn:opt-int-ind}
    \mathbb{E}[X_{1,k}-t|X_{1,k}>t] > \mathbb{E}[X_{1,k}]
\end{equation}
\noindent for $t\in \mathbb{T}\setminus\{\infty\}$, and it is noted that all heavy-tailed and some light-tailed completion time distributions satisfy this condition. If $X_{k,n}$ and $R_{k,n}$ are correlated, this is no longer true and the situation is more complicated. In the following, we extend this result to correlated $(X_{k,n},R_{k,n})$ pairs, and investigate the effect of the return time $C_{k}(t)$.

\begin{theorem}[Optimal Restart Time]\label{thm:opt-int}
    For a given arm $\big(X_{k,n},R_{k,n},C_{k}(t)\big)$, we have $t_k^*<\infty$ if and only if the following holds:
    \begin{equation}
        \frac{\mathbb{E}\big[R_{k,n}\big|X_{k,n}>t\big]}{\mathbb{E}\big[X_{k,n}-\big(t+C_{k}(t)\big)\big|X_{k,n}>t\big]} < \frac{\E[R_{k,1}]}{\E[X_{k,1}]},
    \label{eqn:opt-restart}
    \end{equation}
    \noindent for some $t > 0$.
\end{theorem}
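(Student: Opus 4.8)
The plan is to compare the reward rate $r_k(t)$ at a finite cutoff $t$ against its value under the no-restart option $t=\infty$, and to recast the inequality $r_k(t)>r_k(\infty)$ into the stated ratio condition by an exact algebraic identity. First I would pin down the benchmark rate. Since $X_{k,1}>0$ a.s.\ with moments up to order $4$ finite, as $t\to\infty$ we have $V_{k,1}(t)\uparrow R_{k,1}$, $U_{k,1}(t)\to X_{k,1}$, and the reset contribution vanishes because $C_k(t)\,\bar F_k(t)\le t\,\p(X_{k,1}>t)\to 0$ by Markov's inequality and \eqref{eqn:moment-assn}, where $\bar F_k(t)=\p(X_{k,1}>t)$. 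Hence $r_k(\infty)=\E[R_{k,1}]/\E[X_{k,1}]=:\rho_k/\mu_k$, and $r_k(t)\to r_k(\infty)$ at the right end of $\mathbb{T}$. Consequently $t_k^*<\infty$ holds exactly when some finite cutoff strictly beats the no-restart rate, i.e.\ when $r_k(t)>r_k(\infty)$ for some $t>0$; the boundary behavior above guarantees that such a strict improvement is attained at a finite maximizer rather than escaping to infinity.

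The core step is the equivalence $r_k(t)>r_k(\infty)\Leftrightarrow$ [ratio condition]. Since $\E[U_{k,1}(t)]>0$ and $\mu_k>0$, cross-multiplying gives $r_k(t)>\rho_k/\mu_k \Leftrightarrow g(t):=\mu_k\E[V_{k,1}(t)]-\rho_k\E[U_{k,1}(t)]>0$. I would then decompose each expectation by conditioning on $\{X_{k,1}>t\}$, writing $\E[V_{k,1}(t)]=\rho_k-\E[R_{k,1}\mid X_{k,1}>t]\,\bar F_k(t)$ and $\E[U_{k,1}(t)]=\mu_k-\E[X_{k,1}-t\mid X_{k,1}>t]\,\bar F_k(t)+C_k(t)\bar F_k(t)$. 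Substituting and cancelling $\mu_k\rho_k$ collapses $g(t)$ to $g(t)=\bar F_k(t)\bigl(\rho_k\,\E[X_{k,1}-(t+C_k(t))\mid X_{k,1}>t]-\mu_k\,\E[R_{k,1}\mid X_{k,1}>t]\bigr)$. Dividing by $\bar F_k(t)>0$ shows $r_k(t)>r_k(\infty)$ is equivalent to $\mu_k\,\E[R_{k,1}\mid X_{k,1}>t]<\rho_k\,\E[X_{k,1}-(t+C_k(t))\mid X_{k,1}>t]$.

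It remains to turn this into the displayed ratio. The subtle point — and the main obstacle — is the sign of the residual term $\E[X_{k,1}-(t+C_k(t))\mid X_{k,1}>t]$: dividing by it to form the ratio is innocuous only when it is positive. I would argue that positivity is forced precisely in the improving regime. Because $R_{k,1}\in[0,1]$ gives $\mu_k\E[R_{k,1}\mid X_{k,1}>t]\ge 0$, the inequality above can hold only if its right-hand factor is strictly positive (the degenerate case $\rho_k=0$ makes all rates vanish and is excluded). Hence on the set of cutoffs where improvement is possible the residual is positive, division is legitimate, and the inequality rearranges to $\E[R_{k,1}\mid X_{k,1}>t]/\E[X_{k,1}-(t+C_k(t))\mid X_{k,1}>t]<\rho_k/\mu_k=\E[R_{k,1}]/\E[X_{k,1}]$. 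Combining this equivalence with the reduction of the first paragraph yields the claimed iff.

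I expect the genuinely delicate parts to be twofold: the attainment/continuity bookkeeping ensuring a finite maximizer exists once a finite improving cutoff is present (and the handling of ties at the level $r_k(\infty)$, where both finite and infinite cutoffs could be optimal), and the sign argument that legitimizes passing to the ratio form. The algebraic collapse of $g(t)$ itself is routine once the conditional decompositions of $\E[V_{k,1}(t)]$ and $\E[U_{k,1}(t)]$ are in hand.
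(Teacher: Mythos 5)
Your proposal is correct and follows essentially the same route as the paper, whose entire proof is the one-line observation that the ratio condition \eqref{eqn:opt-restart} is equivalent to $r_k(t) > \E[R_{k,1}]/\E[X_{k,1}]$; your conditional decomposition of $\E[V_{k,1}(t)]$ and $\E[U_{k,1}(t)]$ is exactly how that equivalence is established. You are in fact more careful than the paper on the two delicate points you flag (the sign of the residual term when $C_k(t)>0$, and attainment of the maximizer), which the paper's sketch leaves implicit.
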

\noindent The proof follows from showing \eqref{eqn:opt-restart} is equivalent to $r_k(t) > \E[R_{k,1}]/\E[X_{k,1}]$ for any $t >0$.

Interpretation of Theorem \ref{thm:opt-int} is as follows: for any restart time $t \in \mathbb{T}\setminus\{\infty\}$, if the reward rate of waiting until the completion (i.e., \emph{the residual reward rate}) is lower than the reward rate of a new trial, then it is optimal to restart.

\begin{remark}\normalfont
Note that \eqref{eqn:opt-int-ind} is a special case of Theorem \ref{thm:opt-int} with immediate returns, i.e., $C_{k}(t)=0$, and $X_{k,n}$ and $R_{k,n}$ are independent.
\end{remark}

\begin{remark}[The effect of correlation]\normalfont The correlation between $X_{k,n}$ and $R_{k,n}$ has a substantial impact on whether the optimal restart times are finite or not. As an example, let $X_{k,n}\sim Pareto(1, \alpha)$ for some $\alpha \in (1,2)$, and $C_k(t) = 0$ for all $t$.
\begin{enumerate}
    \item If $X_{k,n}$ and $R_{k,n}$ are independent, then \eqref{eqn:opt-int-ind} holds and $t_k^* < \infty$, i.e., it is optimal to restart after a finite time.
    \item If $R_{k,n}=\omega X_{k,n}^\gamma$ for some $\omega>0$ and $\gamma \geq 1$, then it is optimal to wait until the end of the task, i.e., $t_k^* = \infty$.
    \item If $R_{k,n}=\omega X_{k,n}^\gamma$ for $\omega>0$ and $\gamma<1$, then we have $t_k^*<\infty$.
\end{enumerate}
The impact of the correlation between $X_{k,n}$ and $R_{k,n}$ on the behavior of optimal restart time is illustrated in Figure \ref{fig:pareto-correlation}.
\begin{figure}
    \centering
    \includegraphics[scale=0.6]{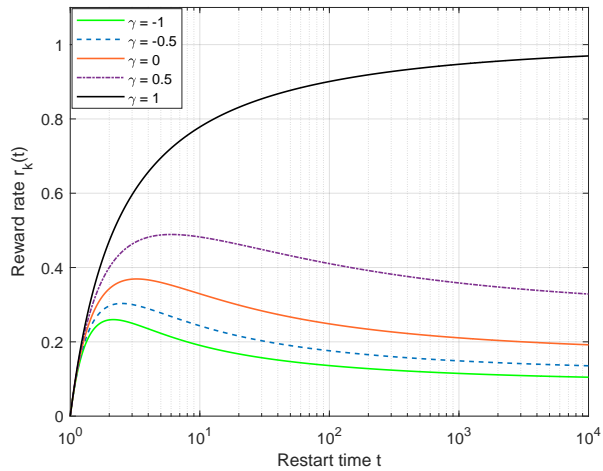}
    \caption{Impact of correlation between $X_{k,n}$ and $R_{k,n}$ on the optimal restart time for $X_{k,n}\sim Pareto(1,1.2)$ and $\gamma\in[-1,1]$. Positive correlation between the completion time and reward leads to higher restart times, and waiting until the completion of every task is optimal for $\gamma\geq 1$ since waiting becomes more rewarding.}
    \label{fig:pareto-correlation}
\end{figure}
\end{remark}

In the next section, we develop online learning algorithms for the problem, and present the regret bounds.

\section{Online Learning Algorithms for Controlled Restarts}
In this section, we develop online learning algorithms with provably good performance guarantees. The right-censored nature of the feedback due to the restart mechanism imposes an interesting information structure to this problem. We first describe the nature of this information structure.

\subsection{Right-Censored Feedback and Information Structure}\label{subsec:info-structure}
Recall that the feedback we obtain for a decision of $(k, t)$ is the pair of right-censored random variables $(U_{k,n}(t), V_{k,n}(t))$ as in \eqref{eqn:ukl}. As a consequence, for any $t^\prime \leq t$, we have the following:
\begin{align*}
    U_{k,n}(t^\prime) &\in \sigma\big(U_{k,n}(t)\big),\\
    V_{k,n}(t^\prime) &\in \sigma\big(V_{k,n}(t)\big).
\end{align*}
\noindent In other words, the feedback from a restart time decision $t>0$ can be faithfully used as a feedback for another restart time decision $t^\prime \leq t$. This implies that the information gain by a large $t\in\mathbb{T}$ is larger compared to $t^\prime\leq t$.

\subsection{Finite Set of Restart Times: {UCB-RB} and UCB-RM}
We consider a finite $\mathbb{T}=\{t_1,t_2,\ldots,t_L\}$ such that 
\begin{equation}
    t_1< t_2 < \ldots < t_L \leq \infty.
\end{equation}
Throughout the paper, we assume that any action set $\mathbb{T}$ satisfies the following assumption:
\begin{assumption}
Given a decision set $\mathbb{T}$, there exists $\epsilon, \mu_*>0$ that satisfies the following: $$\E[\min\{X_{k,1},t_1\}] \geq \mu_*,$$ and $$\p(X_{k,1}\leq t_1|R_{k,1}=\rho) \geq \epsilon,~\forall \rho \in[0,1],$$ for all $k\in[K]$, where $t_1=\min \mathbb{T}$.
\label{assn:action-set}
\end{assumption}
\noindent Note that Assumption \ref{assn:action-set} is a simple technical condition that ensures efficient estimation of $r_k(t)$ from the samples of $(X_{k,n}, R_{k,n})$ for all $t\in\mathbb{T}$.

In order to capture the benefits of the information structure, for arm $k$ and restart time decision $t_l$, let $$\mathcal{I}_{k,l}(n) = \{i\leq n:\pi_n = (k, t_l)\}.$$ Then, the available feedback for a decision $(k, t_l)$ is as follows: $$\mathcal{I}_{k,l}^*(n) = \bigcup\limits_{l^\prime \geq l}\mathcal{I}_{k,l}(n).$$ The size of $\mathcal{I}_{k,l}^*(n)$, i.e., the number of samples available for $(k,l)$ is defined as follows:
\begin{equation}
    T_{k,l}^*(n) = |\mathcal{I}_{k,l}^*(n)| = \sum_{l^\prime\geq l}T_{k,l^\prime}(n),
\end{equation}
From above, it is observed that the information structure increases the number of samples substantially for each decision, i.e., $T_{k,l}^*(n) \geq T_{k,l}(n)$ for all $k,l$.

The radius of the action set $\mathbb{T}$ has a crucial impact in algorithm design and performance, depending on the tail distributions of the completion times. In the following, we propose two algorithms for small and large (potentially infinite) $t_L$, and compare their characteristics.
\subsubsection{UCB-RB Algorithm}
The analysis in Section \ref{sec:opt-restart} indicates that in many cases, the optimal restart time is finite for arm $k$, i.e., $t_k^*<\infty$. In such cases, the action set $\mathbb{T}$ is localized around the potential optimal restart times, i.e., $t_L$ has a finite value. As a direct consequence of this observation, the support set of the completion times $U_{k,n}(t)$ is small for all $t\in\mathbb{T}$, which enables the use of fast estimation techniques. Below, we propose an algorithm for this setting based on empirical Bernstein inequality inspired by the UCB-B2 Algorithm in the classical stochastic bandits with knapsacks setting \citep{cayci2020budget}.

The UCB-RB Algorithm is based on empirical estimation, and inherently assumes that $t_L<\infty$. For an index set $S\subset \mathbb{Z}_+$, let the empirical mean $\widehat{\E}_S$ and empirical variance $\mathbb{V}_S$ of a random sequence $\{Y_i:i\geq 1\}$ be defined as follows:
\begin{align}
    \widehat{\mathbb{E}}_S(Y) &= \frac{1}{|S|}\sum_{i\in S}Y_i,\label{eqn:emp-mean}\\
    \mathbb{V}_S(Y) &= \frac{1}{|S|}\sum_{i\in S}\Big(Y_i-\widehat{\mathbb{E}}_S[Y]\Big)^2.\label{eqn:emp-var}
\end{align}

For any $(k,t_l)$ pair, let:
\begin{equation}\label{eqn:emp-rate}
    \widehat{r}_{k,l,n} = \frac{\widehat{\E}_{\mathcal{I}_{k,l}^*(n)}[V_{k}(t_l)]}{\widehat{\E}_{\mathcal{I}_{k,l}^*(n)}[U_{k}(t_l)]},    
\end{equation}
\noindent be the empirical reward rate. For $\alpha > 2$ and $\beta \in (0,1)$, let
\begin{equation}\label{eqn:emp-radius}
    c_{k,l,n} = \frac{(\beta+1)^2}{1-\beta}\frac{\eta_{k,l,n}+\widehat{r}_{k,l,n}\epsilon_{k,l,n}}{\widehat{\E}_{\mathcal{I}_{k,l}^*(n)}[U_{k}(t_l)]},
\end{equation}
for the confidence radii
\begin{align*}
    \epsilon_{k,l,n} &= \frac{3t_l\log(n^\alpha)}{T_{k,l}^*(n)}+\sqrt{\frac{2\mathbb{V}_{\mathcal{I}_{k,l}^*(n)}(U_k(t_l))\log(n^\alpha)}{T_{k,l}^*(n)}},\\
    \eta_{k,l,n} &= \frac{3\log(n^\alpha)}{T_{k,l}^*(n)}+\sqrt{\frac{2\mathbb{V}_{\mathcal{I}_{k,l}^*(n)}(V_k(t_l))\log(n^\alpha)}{T_{k,l}^*(n)}}.
\end{align*}

Then, the controller under UCB-RB makes a decision at $(n+1)$-th stage as follows: $$(I_{n+1},\nu_{n+1})\in\underset{(k,t_l)\in[K]\times\mathbb{T}}{\arg\max}~\big\{\widehat{r}_{k,l,n}+c_{k,l,n}\big\}.$$ The {UCB-RB} Algorithm is defined in detail in Algorithm \ref{alg:ucb-r}.
\begin{algorithm}
$n=1, i=1$\;
$S_0=0$\;
// Begin initialization for {\tt init} trials\\
\While{$i \leq {\tt init} ~\&~ S_{n-1}\leq B$}{
    \For{$k\in[K],l\in[L]$}{
        $(I_n,\nu_n) = (k,t_l)$\;
        $S_n=U_{k,n}(t_l)$\;
        Obtain reward $V_{k,n}(t_l)$\;
        n=n+1\;
    }
    i=i+1\;
}
\While{$S_{n-1} \leq B$}{
Compute $\widehat{r}_{k,l,n-1}$ by \eqref{eqn:emp-rate} and $c_{k,l,n-1}$ by \eqref{eqn:emp-radius}.\\
$(I_{n},\nu_{n})=\arg\max\limits_{(k,t_l)}~\{\widehat{r}_{k,l,{n-1}}+c_{k,l,{n-1}}\}$\\
\eIf{$X_{I_{n},n} \leq \nu_{n}$}{
$S_{n}=S_{n-1}+X_{I_{n},n}$\;
Obtain reward $R_{I_{n},n}$\;
} {
$S_{n} = S_{n-1}+\nu_{n}+C_{k_{n}}(\nu_{n})$;\\
}
Update the estimates for $l:t_l \leq \nu_{n}$\;
Add $n\mapsto \mathcal{I}_{k,l}^*(n)$ for all $l:t_l\leq \nu_{n}$\;
$n=n+1$\;
}
\caption{{UCB-RB} Algorithm $\pi^{\tt B}(\mathbb{T})$}\label{alg:ucb-r}
\end{algorithm}
\noindent Note that the information structure that stems from the right-censored feedback is utilized in {UCB-RB}. As we will see in the performance analysis, this information structure leads to substantial improvements in the performance.

In the following, we provide problem-dependent regret upper bounds for the UCB-RB Algorithm. 



\begin{theorem}[Regret Upper Bound for {UCB-RB}]\label{thm:ucb-rb}
    For any arm $k$ and restart time $t_l$, let $\Delta_{k,l} = r_*-r_k(t_l)$ and
    \begin{align}
        \mathbb{C}_{k,l} &= \mathbb{C}^2\big(U_{k,1}(t_l)\big)+\mathbb{C}^2\big(V_{k,1}(t_l)\big),\\
        \mathbb{K}_{k,l} &= \mathbb{K}\big(U_{k,1}(t_l)\big)+\mathbb{K}\big(V_{k,1}(t_l)\big),\\
        \mathbb{B}_{k,l} &= \frac{1}{\mathbb{E}[V_{k,1}(t_l)]} + \frac{2t_l}{\mathbb{E}[U_{k,1}(t_l)]},
    \end{align}
    \noindent where $\mathbb{C}(X)$ and $\mathbb{K}(X)$ is the coefficient of variation and kurtosis of a random variable $X$, respectively \citep{asmussen2008asymptotic}. Then, the regret under the UCB-RB Algorithm is bounded as follows:
    \begin{equation*}
        {\tt REG}_{\pi^{\tt B}}(\tau) \leq \sum_{k}3\alpha \min\{\xi_k,\xi_k^{\prime}\}\log(\tau)+O(KL),
    \end{equation*}
    \noindent where for each $k\in[K]$
    \begin{align}
        \xi_k &= \sum_{l=1}^L\Delta_{k,l}\cdot\mu_{k,l}\cdot \Big\{z^2(\beta)\mathbb{C}_{k,l}\frac{r_*^2}{\Delta_{k,l}^2}+\frac{2z(\beta)r_*\mathbb{B}_{k,l}}{\Delta_{k,l}}+8\big(\mathbb{K}_{k,l}+\mathbb{B}_{k,l}^2\big)\Big\},\\
        \tilde{\xi}_k &= \max_l\{\Delta_{k,l}\cdot\mu_{k,l}\}\cdot \max_l\Big\{z^2(\beta)\mathbb{C}_{k,l}\frac{r_*^2}{\Delta_{k,l}^2}+2z(\beta)\mathbb{B}_{k,l}\frac{r_*}{\Delta_{k,l}}+8\big(\mathbb{K}_{k,l}+\mathbb{B}_{k,l}^2\big)\Big\},
    \end{align}
    \noindent for $\alpha > 2$, $\mu_{k,l} = \mathbb{E}[U_{k,1}(t_l)]$ and
    \begin{equation}
        z(\beta) = \max\Big\{2\sqrt{2}\frac{(1+\beta)^2}{(1-\beta)^3}, \frac{1}{\beta}\Big\},~\beta\in(0,1),\label{eqn:z-beta}
    \end{equation}
    which stems from using the empirical estimates for unknown quantities.
\end{theorem}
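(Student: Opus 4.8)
The plan is to follow the standard UCB regret-decomposition strategy, adapted to the renewal-reward structure and the right-censored information sharing. First I would decompose the regret using the optimality-gap results already established. By the Corollary following Proposition \ref{prop:rew-st}, the static policy's optimality gap is $O(1)$, so it suffices to bound the reward loss of UCB-RB relative to $\pi^{\tt st}$. Since each epoch consumes expected time $\mu_{k,l}=\E[U_{k,1}(t_l)]$ and the per-epoch reward-rate shortfall of playing a suboptimal $(k,t_l)$ is $\Delta_{k,l}=r_*-r_k(t_l)$, I would write the regret (up to the $O(1)$ gap and boundary terms) as a weighted count $\sum_{k,l}\Delta_{k,l}\,\mu_{k,l}\,\E[T_{k,l}(N_{\pi}(\tau))]$, i.e.\ the expected number of suboptimal epochs, each scaled by its time cost and rate gap. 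Converting from the time horizon $\tau$ to a count of pulls is where the renewal structure enters: the total number of epochs is $\Theta(\tau)$, and a pull of $(k,t_l)$ occupies time $\mu_{k,l}$ on average, which is why $\mu_{k,l}$ appears as a multiplier rather than being absorbed.

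Next I would bound $\E[T_{k,l}(N_\pi(\tau))]$ by the classical UCB argument: a suboptimal arm is played at stage $n$ only if its index $\widehat{r}_{k,l,n-1}+c_{k,l,n-1}$ exceeds that of the optimal action, which forces at least one of three events — the optimal action's index is below $r_*$, the suboptimal rate estimate $\widehat{r}_{k,l,n-1}$ exceeds $r_k(t_l)$ by more than the confidence radius, or the confidence radius $c_{k,l,n-1}$ itself is still large because $T^*_{k,l}(n-1)$ is small. The first two are controlled by concentration: I would invoke the empirical-Bernstein concentration of the ratio estimator $\widehat{r}_{k,l,n}$ around $r_k(t_l)$. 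The key point here is that $\widehat{r}_{k,l,n}$ is a ratio of empirical means $\widehat{\E}[V_k(t_l)]/\widehat{\E}[U_k(t_l)]$, so I would establish a confidence bound for the ratio by combining the empirical-Bernstein bounds for numerator ($\eta_{k,l,n}$) and denominator ($\epsilon_{k,l,n}$), with the $(\beta+1)^2/(1-\beta)$ factor and the $z(\beta)$ constant arising from converting empirical variances to true variances and from a self-bounding/peeling step that replaces the unknown $\mathbb{V}_S$ by its population value. The union bound over the $\alpha\log n$ confidence level gives failure probabilities summable in $n$, contributing the $O(KL)$ term.

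The third event — the radius being large — gives the dominant logarithmic term. Setting $c_{k,l,n}\lesssim \Delta_{k,l}/2$ and solving for $T^*_{k,l}$ yields a threshold of the form $T^*_{k,l}(n)\gtrsim \alpha\log(n)\bigl[z^2(\beta)\mathbb{C}_{k,l} r_*^2/\Delta_{k,l}^2 + z(\beta)r_*\mathbb{B}_{k,l}/\Delta_{k,l} + (\mathbb{K}_{k,l}+\mathbb{B}_{k,l}^2)\bigr]$, where the coefficient-of-variation term $\mathbb{C}_{k,l}$ comes from the leading variance term in $\epsilon_{k,l,n},\eta_{k,l,n}$, the $\mathbb{B}_{k,l}$ term from the $3t_l\log/T^*$ bias term and the cross term, and the kurtosis $\mathbb{K}_{k,l}$ from controlling the deviation of the empirical variance $\mathbb{V}_S$ from the true variance. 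This is precisely $\xi_k$ once summed over $l$ and weighted by $\Delta_{k,l}\mu_{k,l}$. The $\tilde{\xi}_k$ (written $\xi_k'$ in the statement) bound is the alternative obtained by exploiting the information-sharing: since $T^*_{k,l}(n)=\sum_{l'\ge l}T_{k,l'}(n)$ aggregates samples across all larger restart times, I would bound the suboptimal play counts jointly rather than per-$l$, pulling the $\max_l$ outside and yielding the $\max\{\xi_k,\xi_k'\}$ improvement; taking the minimum of the two bounds is valid since both hold simultaneously.

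The main obstacle I anticipate is the ratio concentration under the shared, right-censored feedback. Unlike a standard UCB where the per-arm samples are i.i.d.\ and the count is a simple stopping time, here $\widehat{r}_{k,l,n}$ uses the pooled index set $\mathcal{I}^*_{k,l}(n)$, whose samples come from decisions $(k,t_{l'})$ with $t_{l'}\ge t_l$; I must verify that the censored observations $U_{k,i}(t_l),V_{k,i}(t_l)$ reconstructed from a larger cutoff $t_{l'}$ are genuinely i.i.d.\ copies of $U_{k,1}(t_l),V_{k,1}(t_l)$ (this is exactly the information-structure claim of Section \ref{subsec:info-structure}), so that empirical-Bernstein applies to a random but adapted sample size $T^*_{k,l}(n)$. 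Handling this adapted sample count correctly — making the concentration uniform over $n$ via the $\log(n^\alpha)$ level and a peeling argument, and ensuring the variance/kurtosis terms genuinely reflect the censored variables rather than the raw $(X_{k,1},R_{k,1})$ — is the technically delicate part, and Assumption \ref{assn:action-set} (the $\epsilon,\mu_*$ condition) is what guarantees the denominator $\widehat{\E}[U_k(t_l)]$ stays bounded away from zero so the ratio bound does not blow up.
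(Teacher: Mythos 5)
Your proposal follows essentially the same route as the paper's proof: the regret decomposition into $\sum_{k,l}\Delta_{k,l}\,\mu_{k,l}\,\E[T_{k,l}(\cdot)]$ (the paper's Lemma~\ref{lem:reg-dec}, which replaces the random pull count $N_\pi(\tau)$ by the high-probability bound $2\tau/\mu_*$ via Azuma--Hoeffding, the one step you gloss over), the three-event clean-event analysis with empirical-Bernstein concentration for the ratio estimator, and the pooled-sample argument over $\mathcal{I}^*_{k,l}(n)$ that yields the $\tilde{\xi}_k$ alternative. No substantive gaps.
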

\begin{remark}
For any arm $k$, $\xi_k$ corresponds to the coefficient without using the information structure, and grows linearly with $L$, and shows that the dependence of the regret on $\Delta_{k,l}$ is $O(1/\Delta_{k,l})$. On the other hand, $\tilde{\xi}_k$ reflects the effect of exploiting the structure, and it is usually much lower than $C_k$. Hence, we have the following order result for the regret: $${\tt REG}_{\tt RB}(B) = O\Big(t_L^2\big(\sum_{k \neq k^*}\frac{1}{\Delta_k}+\frac{1}{\min_l\Delta_{k^*,l}}+\sum_k\max_l\Delta_{k,l}\big)\log(\tau)\Big),$$ since $\Delta_k \leq \Delta_{k,l}$ for all $l$ for $k\neq k^*$, and $\mathbb{K}(Z)\leq b^2$ for a bounded random variable $Z\in[0,b]$. In other words, as a result of exploiting the information structure, the effect of large $L$ on the regret is eliminated.
\end{remark}

\begin{remark}\label{rem:tl-dependence}
The regret upper bound in Theorem \ref{thm:ucb-rb} grows at a rate $O(t_L^2\log(\tau))$, where the constant additive term is independent of $\tau$ and $t_L$ if $\E[X_{k,1}^p]<\infty$ for some $p>2$. Therefore, if the optimal restart time can take on a large value, then the regret performance deteriorates significantly. This dependence on $t_L$ stems from the nature of the empirical mean estimator used for estimating the reward rate, and it is inevitable \citep{audibert2009exploration}. Therefore, the UCB-RB Algorithm is suitable only for the cases where the restart times are small.
\end{remark}

\begin{proof}
The proof of Theorem \ref{thm:ucb-rb} follows a similar strategy as \citep{cayci2019learning}, and can be found in detail in Appendix \ref{app:ucb-rb}. We will provide a proof sketch here. The main challenge in the proof is two-fold: analyzing the effect of using empirical estimates, and finding tight upper and lower bounds for the expectation of the total reward ${\tt REW}_\pi(\tau)$, which is a controlled and stopped random walk with non-i.i.d. increments. For any $(k, t_l)\in[K]\times\mathbb{T}$, let $T_{k,l}(n)$ be the number of times the controller makes the decision $(k, t_l)$ in the first $n$ stages, and $\Delta_{k,l}$ be the expected 'regret per unit time' if $(k,t_l)$ is chosen. Then, by using tools from renewal theory and martingale concentration inequalities, we express the regret as follows: $${\tt REG}_{\tt \pi^B}(\tau) \leq \sum_{(k,l):\Delta_{k,l}>0}O(1)~\E[T_{k,l}(n_0(\tau))]\E[U_{k,1}(t_l)]\Delta_{k,l} + O(KL),$$ where $n_0(\tau)$ is a high-probability upper bound for $N_{\tt \pi^B}(\tau)$, the total number of pulls in $[0,\tau]$. By using a clean-event bandit analysis akin to \citep{audibert2009exploration} to bound $\E[T_{k,l}(n_0(\tau))]$, we prove the theorem. Note that the UCB-RB Algorithm makes use of the empirical mean and variance estimates to achieve improved regret performance without any prior knowledge, and we devise novel tools to analyze the impact of using empirical estimates on the regret. 
\end{proof}

Remark \ref{rem:tl-dependence} emphasizes a crucial shortcoming of the UCB-RB Algorithm in dealing with large (potentially infinite) waiting times. In order to achieve $O(\log(\tau))$ regret in a more general setting, we design a more general algorithm in the following.

\subsubsection{UCB-RM Algorithm}
If the completion time distributions are such that the optimal restart time is very large or potentially infinite for some arms, the empirical rate estimation fails to provide fast convergence rates, which leads to substantially deteriorated regret bounds. In order to overcome this, we will design a UCB-type policy that incorporates a median-based rate estimator to achieve good performance in a very general setting that allows not restarting as a possible action, i.e., we consider a decision set $$\mathbb{T}=\{t_1<t_2<\ldots<t_L = \infty\},$$ where $t_L=\infty$ implies the controller can wait until the task is completed.

\begin{definition}[Median-based rate estimation]
    Consider $(k,t_l)\in[K]\times\mathbb{T}$, and let $G_1,G_2,\ldots,G_m$ be a partition of $\mathcal{I}_{k,l}^*(n)$ such that $G_j = \lfloor T_{k,l}^*(n)/m \rfloor,~\forall j\in[m]$ for $m=\lfloor 3.5\alpha\log(n)\rfloor + 1$. The median-of-means estimator for $U_{k,n}(t_l)$ is defined as follows:
    \begin{equation}\label{eqn:med-of-means}
        \mathbb{M}_{\mathcal{I}_{k,l}^*(n)}(U_{k}(t_l)) = {\tt median}\Big\{\widehat{\mathbb{E}}_{G_1}[U_{k}(t_l)],\widehat{\mathbb{E}}_{G_2}[U_{k}(t_l)],\ldots,\widehat{\mathbb{E}}_{G_m}[U_{k}(t_l)]\Big\},
    \end{equation}
    \noindent where $\widehat{\mathbb{E}}_S[X]$ is the empirical mean of a sequence $X$ over the index set $S$ defined in \eqref{eqn:emp-mean}. Then, the median-based rate estimator for the action $(k,t_l)$ is defined as follows:
    \begin{equation}\label{eqn:med-rate}
        \widehat{r}^{\tt M}_{k,l,n} = \frac{\mathbb{M}_{\mathcal{I}_{k,l}^*(n)}(V_{k}(t_l))}{\mathbb{M}_{\mathcal{I}_{k,l}^*(n)}(U_{k}(t_l))}.
    \end{equation}
\end{definition}

Similarly, let $$\mathbb{V}^{\tt M}_{\mathcal{I}_{k,l}^*(n)}(U_k(t_l)) = {\tt median}\Big\{\mathbb{V}_{G_1}(U_k(t_l)),\ldots,\mathbb{V}_{G_m}(U_k(t_l))\Big\},$$ where $\mathbb{V}_S$ is the empirical variance defined in \eqref{eqn:emp-var}. By using this median-based variance estimator, let \begin{equation}c_{k,l,n}^{\tt M} = \frac{(1+\beta)^2}{(1-\beta)}\cdot \frac{\eta^{\tt M}_{k,l,n}+\widehat{r}^{\tt M}_{k,l,n}\cdot\epsilon^{\tt M}_{k,l,n}}{\mathbb{M}_{\mathcal{I}_{k,l}^*(n)}(U_k(t_l)},~\beta\in(0,1),\label{eqn:med-radius}\end{equation}\noindent for the confidence radii defined as follows:
\begin{align}
    \epsilon_{k,l,n}^{\tt M} &= 11\sqrt{\frac{2\mathbb{V}^{\tt M}_{\mathcal{I}_{k,l}^*(n)}(U_k(t_l))\log(n^\alpha)}{T_{k,l}^*(n)}},\\
    \eta_{k,l,n}^{\tt M} &= 11\sqrt{\frac{2\mathbb{V}^{\tt M}_{\mathcal{I}_{k,l}^*(n)}(V_k(t_l))\log(n^\alpha)}{T_{k,l}^*(n)}}.
\end{align}
Then, the inequality $\widehat{r}_{k,l,n}^{\tt M}+{c}_{k,l,n}^{\tt M} > r_k(t_l)$ holds with high probability for sufficiently large $T_{k,l}^*(n)$. Based on this construction, the UCB-RM Algorithm makes a decision for the $(n+1)$-th arm pull as follows:
\begin{equation}
    (I_{n+1},\nu_{n+1}) \in \underset{(k,l)\in[K]\times[L]}{\arg\max}~\big\{\widehat{r}_{k,l,n}^{\tt M}+c_{k,l,n}^{\tt M}\big\}.
\end{equation}

In the following theorem, we analyze the performance of the UCB-RM Algorithm.
\begin{theorem}[Regret Upper Bound for {UCB-RM}]\label{thm:ucb-rm}
    The regret under the UCB-RM Algorithm is bounded as follows:
    \begin{equation*}
        {\tt REG}_{\pi^{\tt M}}(\tau) \leq \sum_{k}3\alpha \min\{\xi_k^{\tt M},\tilde{\xi}_k^{\tt M}\}\log(\tau)+O(KL),
    \end{equation*}
    \noindent where for each $k\in[K]$
    \begin{align}
        \xi_k^{\tt M} &=  \sum_{l=1}^L\Big\{11^2z^2(\beta)\mathbb{C}_{k,l}\frac{r_*^2}{\Delta_{k,l}}+1024\big(\mathbb{K}_{k,l}+\zeta\big)\Delta_{k,l}\Big\},\\
        \tilde{\xi}_k^{\tt M} &= \max_l\Delta_{k,l}\cdot \max_l\Big\{11^2z^2(\beta)\mathbb{C}_{k,l}\frac{r_*^2}{\Delta_{k,l}^2}+1024\big(\mathbb{K}_{k,l}+\zeta\big)\Big\},
    \end{align}
    \noindent for some constant $\zeta > 0$, $\alpha > 2$ and $z(\beta)$ defined in \eqref{eqn:z-beta}.
\end{theorem}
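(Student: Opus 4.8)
The plan is to reuse the entire architecture of the UCB-RB proof (Theorem \ref{thm:ucb-rb}) and replace only the concentration engine, swapping the empirical-Bernstein control of the rate estimator for a median-of-means analysis that is insensitive to the support $t_L$. First I would carry out the same renewal-theoretic regret decomposition: combining Wald's identity with the excess-over-the-boundary bound from Proposition \ref{prop:rew-opt} and Lorden's inequality (as in Proposition \ref{prop:rew-st}), I write
$${\tt REG}_{\pi^{\tt M}}(\tau) \leq \sum_{(k,l):\Delta_{k,l}>0} O(1)\,\E[T_{k,l}(n_0(\tau))]\,\E[U_{k,1}(t_l)]\,\Delta_{k,l} + O(KL),$$
where $n_0(\tau)=O(\tau/\mu_*)$ is a high-probability upper bound on the number of pulls $N_{\pi^{\tt M}}(\tau)$ in $[0,\tau]$, obtained from a martingale concentration bound on the first-passage time of the controlled walk $\{S_n\}$. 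This step is identical to the UCB-RB case and uses no specific structure of the estimator; the $O(KL)$ term absorbs the initialization phase and the lower-order excess contributions.

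The key new ingredient is the concentration analysis of the median-based rate estimator. Here I would show that on a high-probability \emph{clean event} the confidence interval $[\widehat{r}^{\tt M}_{k,l,n}-c^{\tt M}_{k,l,n},\,\widehat{r}^{\tt M}_{k,l,n}+c^{\tt M}_{k,l,n}]$ contains the true rate $r_k(t_l)$, via three sub-parts. (i) A two-sided concentration for each median-of-means estimator $\mathbb{M}$ of $U_k(t_l)$ and $V_k(t_l)$: the median deviates only if at least half of the $m=\lfloor 3.5\alpha\log(n)\rfloor+1$ group means deviate, each group mean concentrates by Chebyshev, and a binomial tail over groups gives failure probability $n^{-\alpha}$; crucially the resulting radius scales with the \emph{variance}, not with $t_L$, which is exactly what removes the $t_L^2$ factor of Remark \ref{rem:tl-dependence}, at the cost of the constant $11$ appearing in $\epsilon^{\tt M}_{k,l,n},\eta^{\tt M}_{k,l,n}$. (ii) A parallel argument that the median-based variance estimator $\mathbb{V}^{\tt M}$ lies within a constant factor of the true variance, so the \emph{empirical} radii are valid surrogates; this is where the prefactor $(1+\beta)^2/(1-\beta)$ and ultimately $z(\beta)$ enter, mirroring the empirical-to-true conversion in UCB-RB. (iii) Propagation of the numerator/denominator deviations to the ratio $r=V/U$, using $\E[U_{k,1}(t_l)]\geq\mu_*$ from Assumption \ref{assn:action-set} to lower-bound the denominator and linearize, yielding the stated form of $c^{\tt M}_{k,l,n}$.

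With the clean event established, the remaining work is the standard UCB counting argument: a suboptimal $(k,t_l)$ is selected at stage $n$ only when $2c^{\tt M}_{k,l,n}\geq\Delta_{k,l}$, and inverting the confidence-radius expression bounds $T^*_{k,l}(n_0(\tau))\lesssim\big(11^2z^2(\beta)\mathbb{C}_{k,l}r_*^2/\Delta_{k,l}^2\big)\log(\tau)+\ldots$, which after multiplication by $\E[U_{k,1}(t_l)]\Delta_{k,l}$ gives the summand of $\xi_k^{\tt M}$. The information structure $T^*_{k,l}(n)=\sum_{l'\geq l}T_{k,l'}(n)$ is exploited exactly as in UCB-RB: a single sufficiently large count at the finest level $l$ simultaneously meets the requirement for all coarser levels, collapsing the sum over $l$ into a maximum and producing the alternative bound $\tilde{\xi}_k^{\tt M}$; taking $\min\{\xi_k^{\tt M},\tilde{\xi}_k^{\tt M}\}$ and substituting into the Step-1 decomposition yields the theorem. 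The main obstacle I anticipate is sub-parts (ii)--(iii): because the median operator is non-linear, the clean Bernstein manipulation of UCB-RB does not transfer directly, so I expect to prove a self-bounding inequality showing $\mathbb{V}^{\tt M}$ concentrates around the variance (this is where the kurtosis term $\mathbb{K}_{k,l}$ and the constant $\zeta$ in $\xi_k^{\tt M}$ accumulate), then combine the two-sided deviations of numerator and denominator through a union bound and a first-order ratio expansion valid once $T^*_{k,l}(n)$ exceeds a threshold. Keeping the constants explicit through this chain (hence the $1024$ and $z(\beta)$ factors) while ensuring the failure probabilities still sum to $O(KL)$ over all $(k,l,n)$ is the delicate bookkeeping part.
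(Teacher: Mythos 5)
Your proposal is correct and follows essentially the same route as the paper: the paper likewise reuses the UCB-RB regret decomposition and counting argument verbatim and only swaps in the median-of-means concentration bound (Proposition 2, part 2 of the appendix), which is where the constants $11$, $1024$ and $\zeta$ originate. The only cosmetic difference is that you propose to derive the median-of-means and median-based variance concentration from first principles (Chebyshev on each block plus a binomial tail over blocks), whereas the paper imports these bounds from Minsker's results on geometric medians.
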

\begin{remark}\normalfont
Notice that the regret upper bound in Theorem \ref{thm:ucb-rm} is upper bounded by functions that depend only the first- and second-order moments of $X_{k,n}$ and $R_{k,n}$, independent of $t_L$. Therefore, the UCB-RM Algorithm is efficient for the cases with very large (potentially infinite) $t_L$ unlike UCB-RB, whose regret grows at a rate $O(t_L^2\log(\tau))$. This generality comes at a price: comparing the coefficients of the $\log(\tau)$ term in Theorem \ref{thm:ucb-rb} and Theorem \ref{thm:ucb-rm}, we observe that the UCB-RM Algorithm suffers from a considerably large scaling coefficient. This suggests that if the optimal restart times are known to be small, then the UCB-RB Algorithm is more efficient than the UCB-RM Algorithm.
\end{remark}

Also, note that the UCB-RM Algorithm does not require any prior knowledge unlike the median-based algorithm {\tt UCB-BwI} in \citep{cayci2019learning}. Instead, it uses empirical estimates for reward rate, mean completion time and variances. The regret upper bound for UCB-RM is tighter compared to {\tt UCB-BwI}.

In the next section, we will develop a learning algorithm for the case $\mathbb{T}$ is continuous.

\subsection{Continuous Set of Restart Times: UCB-RC}
For a broad class of completion time and reward distributions, the reward rate function $r_k(t)$ has a smooth and unimodal structure. By exploiting this property, we can design learning algorithms for a continuous set of restart times $\mathbb{T}$ achieving sublinear regret. In the following, we will propose a learning algorithm for smooth and unimodal $r_k(t)$ based on the UCB-RB Algorithm and the bandit optimization methodology in \citep{combes2014unimodal}. Note that the information structure discussed in Section \ref{subsec:info-structure}  

For the sake of simplicity in exposition, we will consider learning the optimal restart time for a single arm. The extension to $K>1$ is straightforward.

\begin{assumption}
We make the following assumptions on the action set $\mathbb{T}$ and reward rate function $r_1(t)$.
\begin{enumerate}[label=(\roman*)]
    \item Compactness: The decision set $\mathbb{T}$ is a compact subset of $\mathbb{R}_+$:
    \begin{equation}
        \mathbb{T}=[t_{min},t_{max}],
        \label{eqn:compactness}
    \end{equation}
    where $0<t_{min}\leq t_{max}<\infty$. Furthermore, $\mathbb{T}$ satisfies Assumption \ref{assn:action-set} for some $\epsilon,\mu_*>0$ for efficient estimation of $r_1(t)$.
    \item Unimodality: There is an optimal restart time $t_{1}^*\in[t_{min},t_{max}]$ such that $$r_* = r_1(t_{1}^*) \geq r_1(t),$$ for all $t\in\mathbb{T}\backslash\{t_{1}^*\}$.
    \item Smoothness: There exists $\delta_0>0, q > 1$ such that:
        \begin{itemize}
            \item For all $t, t^\prime\in [t_{1}^*,t_{1}^*+\delta_0]$ (or $[t_{1}^*-\delta_0, t_{1}^*]$), the following holds: $$a_1|t-t^\prime|^q \leq |r_1(t)-r_1(t^\prime)|,~a_1 >0,$$
            \item For some $\delta \leq \delta_0$, if $|t-t_{1}^*|\leq \delta$, we have: $$r_*-r_1(t) \leq a_2\delta^q.$$
        \end{itemize}
\end{enumerate}
\label{assn:ucb-rc}
\end{assumption}
\noindent Note that Assumption \ref{assn:ucb-rc} is satisfied for a broad class of distributions. For example, if $X_{k,n}$ and $R_{k,n}$ are independent and $X_{k,n}$ has a uniform, exponential or Pareto distribution, then the conditions are trivially satisfied.

The UCB-RC Algorithm is defined as follows:
\begin{enumerate}
    \item For $\delta = \Big(\sqrt{\log(\tau)/\tau}\Big)^{1/q}$ and ${\tt rad}(\mathbb{T}) = t_{max}-t_{min}$, let $L(\delta) = {\tt rad(\mathbb{T})}/{\delta}$ and $\mathbb{T}_Q = \{t_1,t_2,\ldots,t_{L(\delta)}\}$ where $$t_l = t_{min}+(l-1)\cdot\lceil 1/\delta\rceil,~l=1,2,\ldots,L(\delta).$$
    \item Run the UCB-RB Algorithm over the action set $\mathbb{T}_Q$.
\end{enumerate}

The following theorem provides a regret bound for the UCB-RC Algorithm.

\begin{theorem}[Regret Upper Bound for UCB-RC]
    Under Assumption \ref{assn:ucb-rc}, the regret under the UCB-RC Algorithm satisfies the following asymptotic upper bound:
    \begin{equation}
        \lim\sup_{\tau\rightarrow\infty}~\frac{{\tt REG}_{\tt \pi^C}(\tau)}{\sqrt{\tau\log(\tau)}} \leq a_26^q\frac{\E[X_{1,1}]}{\mu_*} + \frac{3\alpha q}{a_1(q-1)}\mathbb{C}^\star z^2(\beta)r_*^2,
    \end{equation}
    where $$\mathbb{C}^\star = \Big(\frac{\E[X_{1,1}^2]}{\mu_*^2}+\frac{\E[R_{1,1}^2]}{\epsilon^2\big(\E[R_{1,1}]\big)^2}\Big).$$
    \label{thm:ucb-rc}
\end{theorem}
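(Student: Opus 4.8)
The plan is to bound the regret of UCB-RC by a \emph{quantization} term and a \emph{learning} term, and then optimize over the grid width $\delta$. Let $t^\dagger = \arg\max_{t_l\in\mathbb{T}_Q} r_1(t_l)$ be the best grid point and let ${\tt OPT}_Q(\tau)$ denote the best reward achievable using restart times in $\mathbb{T}_Q$. Since UCB-RC is exactly UCB-RB run on $\mathbb{T}_Q$, I would decompose
$${\tt REG}_{\pi^C}(\tau) = \underbrace{\big({\tt OPT}(\tau)-{\tt OPT}_Q(\tau)\big)}_{\text{quantization}} + \underbrace{\big({\tt OPT}_Q(\tau)-\E[{\tt REW}_{\pi^C}(\tau)]\big)}_{\text{learning}},$$
and treat the two pieces separately, choosing $\delta$ at the end to balance them.

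For the quantization term I would combine Proposition~\ref{prop:rew-opt} (the upper bound ${\tt OPT}(\tau)\le r_*(\tau+\Phi)$) with the static lower bound of Proposition~\ref{prop:rew-st} (${\tt OPT}_Q(\tau)\ge r_1(t^\dagger)\tau$), giving ${\tt OPT}(\tau)-{\tt OPT}_Q(\tau)\le(r_*-r_1(t^\dagger))\tau+O(1)$. Because the grid has width $\delta$, there is a grid point within $O(\delta)$ of the continuous optimizer $t_{1}^*$, so the upper-smoothness half of Assumption~\ref{assn:ucb-rc} yields $r_*-r_1(t^\dagger)\le a_2(O(\delta))^q$. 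Converting this reward-rate gap into a cumulative-reward gap over $[0,\tau]$ (the expected number of renewals is at most $\approx\tau/\mu_*$, since $\E[U_{1,1}(t)]\ge\E[\min\{X_{1,1},t_1\}]\ge\mu_*$ by Assumption~\ref{assn:action-set}) together with the grid-rounding constant produces the factor $6^q\,\E[X_{1,1}]/\mu_*$; plugging in $\delta^q=\sqrt{\log(\tau)/\tau}$ makes this term $\approx a_2 6^q(\E[X_{1,1}]/\mu_*)\sqrt{\tau\log(\tau)}$, the first term of the claimed bound.

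For the learning term I would invoke Theorem~\ref{thm:ucb-rb} with $K=1$, $L=L(\delta)$, using the $\xi_1$ form of the coefficient. The decisive observation is that only the $\mathbb{C}_{1,l}$-contribution $3\alpha\log(\tau)\sum_l \mu_{1,l}z^2(\beta)\mathbb{C}_{1,l}r_*^2/\Delta_{1,l}$ is of order $\sqrt{\tau\log(\tau)}$: the $\mathbb{B}_{1,l}$- and $\mathbb{K}_{1,l}$-terms, as well as the additive $O(KL)$, are each $O(L(\delta)\log(\tau))=O(\delta^{-1}\log(\tau))=o(\sqrt{\tau\log(\tau)})$, since $q>1$ forces $\delta^{-1}=(\tau/\log\tau)^{1/2q}=o(\sqrt{\tau})$. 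To control the surviving sum I would use the lower-smoothness bound $\Delta_{1,l}\ge a_1|t_l-t_{1}^*|^q$; indexing grid points by their distance $\approx j\delta$ from $t_{1}^*$,
$$\sum_l\frac{1}{\Delta_{1,l}}\le\frac{1}{a_1\delta^q}\sum_{j\ge1}\frac{1}{j^q}\le\frac{1}{a_1\delta^q}\cdot\frac{q}{q-1},$$
where convergence of the $p$-series for $q>1$ is precisely what keeps this sum from blowing up as the grid refines. Assumption~\ref{assn:action-set} then supplies the uniform bound $\mathbb{C}_{1,l}\le\mathbb{C}^\star$ (the condition $\E[U_{1,1}(t_l)]\ge\mu_*$ controls $\mathbb{C}^2(U)$ and $\p(X_{1,1}\le t_1\mid R_{1,1}=\rho)\ge\epsilon$ controls $\mathbb{C}^2(V)$), and with $\delta^q=\sqrt{\log(\tau)/\tau}$ the learning term becomes $\approx\frac{3\alpha q}{a_1(q-1)}\mathbb{C}^\star z^2(\beta)r_*^2\sqrt{\tau\log(\tau)}$, matching the second term. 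Passing to the $\limsup$ after dividing by $\sqrt{\tau\log(\tau)}$ then gives the stated inequality.

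The hard part will be managing the tension between the two terms: refining the grid shrinks the quantization error but multiplies the number of near-optimal arms, each with a vanishing gap $\Delta_{1,l}$ that inflates the UCB-RB coefficient through $1/\Delta_{1,l}$. The heart of the argument is therefore the $p$-series estimate above, which shows that for $q>1$ the gap-weighted sum stays $\Theta(\delta^{-q})$ rather than growing with $L(\delta)$; balancing $\delta^q\tau$ against $\delta^{-q}\log(\tau)$ is exactly what fixes $\delta^q=\sqrt{\log(\tau)/\tau}$ and yields the $\sqrt{\tau\log(\tau)}$ rate. A secondary technical point is that Theorem~\ref{thm:ucb-rb} is applied to a grid that itself depends on $\tau$, so I would verify that its per-arm quantities ($\mathbb{C}_{1,l},\mathbb{K}_{1,l},\mathbb{B}_{1,l}$ and the underlying moment hypotheses) are bounded uniformly over $t_l\in[t_{min},t_{max}]$ — which holds by Assumption~\ref{assn:action-set} and the finiteness of $t_{max}$ — before taking the limit.
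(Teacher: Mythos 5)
Your overall architecture is the paper's: the same quantization-plus-learning decomposition, the same use of Proposition~\ref{prop:rew-opt} for the quantization term, the same invocation of the UCB-RB analysis on $\mathbb{T}_Q$ with the $p$-series $\sum_j j^{-q}\le q/(q-1)$, the same uniform bound $\mathbb{C}_{1,l}\le\mathbb{C}^\star$ from Assumption~\ref{assn:action-set}, and the same balancing choice $\delta^q=\sqrt{\log(\tau)/\tau}$. However, there is a genuine gap in how you control the learning term: your bound $\sum_l 1/\Delta_{1,l}\le \frac{1}{a_1\delta^q}\sum_{j\ge1}j^{-q}$ silently assumes every suboptimal grid point satisfies $\Delta_{1,l}\ge a_1(j\delta)^q$ with $j\ge 1$. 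This fails for the grid points immediately adjacent to the best grid point $l^*$. Since $t_1^*$ generally falls strictly between two grid points, the runner-up neighbor can have $\Delta_{1,l}$ arbitrarily close to $0$ (e.g.\ when $t_1^*$ sits near the midpoint of a cell), and the lower-smoothness condition gives no useful lower bound there because $\Delta_{1,l}=r_Q^*-r_1(t_l)$, not $r_*-r_1(t_l)$. For those arms $1/\Delta_{1,l}$ is unbounded, so the $\xi_1$-type coefficient from Theorem~\ref{thm:ucb-rb} blows up and your surviving sum is not $\Theta(\delta^{-q})$ as claimed.

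The paper's proof fixes exactly this by partitioning the grid into $A=\{l^*-1,l^*,l^*+1\}$, a ball $B$ around $t_1^*$, and the rest $D$. For $l\in A$ it abandons the $1/\Delta$ bound entirely and instead uses the trivial count $T_{1,l}(2\tau/\mu_*)\le 2\tau/\mu_*$ (from Lemma~\ref{lem:reg-dec}) together with the \emph{upper}-smoothness estimate $\Delta_{1,l}\le a_2(2\delta)^q$, yielding an additional $\Theta(\tau\delta^q)$ contribution. This term is leading order, i.e.\ $\Theta(\sqrt{\tau\log(\tau)})$ after the choice of $\delta$, and is precisely what combines with the quantization error to produce the $a_2 6^q\,\E[X_{1,1}]/\mu_*$ coefficient in the first term of the theorem; your version, which attributes the first term solely to quantization, cannot account for this factor. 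The $p$-series argument you give is then applied only to $B$ (where $|l-l^*|\ge 2$ guarantees $\Delta_{1,l}\gtrsim a_1(|l-l^*|-1)^q\delta^q$), and $D$ contributes only $O(L(\delta)\log\tau)=o(\sqrt{\tau\log\tau})$. You need to add this capping step for the near-optimal neighbors to make the argument go through.
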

\begin{proof}
The detailed proof of Theorem \ref{thm:ucb-rc} can be found in Appendix \ref{app:ucb-rc}. Note that the UCB-RC Algorithm is based on quantizing the decision set $\mathbb{T}$, and running the UCB-RB Algorithm over the quantized decision set $\mathbb{T}_Q$. In the proof, we show that the step size $\delta$ is chosen such that the optimal reward rate over $\mathbb{T}_Q$ is close enough to $r_*$ while the number of quantization levels are kept sufficiently small at the same time. Under the compactness and smoothness assumptions summarized in Assumption \ref{assn:ucb-rc}, the regret upper bound is obtained.
\end{proof}

The UCB-RC Algorithm is based on an extension of the UCB-type algorithm for unimodal discrete-time stochastic bandits proposed in \citep{combes2015bandits}. A straightforward extension of the algorithm in \citep{combes2015bandits} would yield $O(\log(\tau)\sqrt{\tau})$ regret. However, the UCB-RC Algorithm achieves $O(\sqrt{\tau\log(\tau))}$ regret in this case. The order reduction by a factor of $O(\sqrt{\log(\tau)})$ is because UCB-RC incorporates the information structure that stems from the right-censored feedback, discussed in Section \ref{subsec:info-structure}.


\section{Numerical Experiments}
In this section, we evaluate the performance of the proposed learning algorithm for boosting the WalkSAT Algorithm on Random-SAT benchmark sets. The experiments are conducted in a similar manner as the SAT Competition: for a given time interval $[0,\tau]$, the performance metric is the number of solved problem instances, thus there is a unit reward for each successful assignment, i.e., $R_{1, n}=1$.

\textbf{SAT Solver:} We used the C implementation of the WalkSAT Algorithm provided in \citep{kautz-imp} with the heuristics {\tt Rnovelty}.

\textbf{Methodology:} For a fair and universal comparison, we measure the completion time of a problem by the number of flips performed by the WalkSAT Algorithm. We allowed at most 10 restarts for each problem instance. As the number of benchmark instances is small, we generated i.i.d. random samples from the empirical distribution of the completion times by using the inverse transform method whenever the number of samples exceeds the dataset length \citep{ross2014introduction}.

\subsection{Uniform Random-3-SAT}
In the first example, we will evaluate the performance of the {UCB-RB} Algorithm on a Random-3-SAT dataset.

\textbf{Dataset Description:} We evaluated the performance of the meta-algorithms over the widely used Uniform Random-3-SAT benchmark set of satisfiable problem instances in the SATLIB library \citep{hoos2000satlib}. In the dataset {\tt uf-100-430}, there are 1000 uniformly generated problem instances with 100 variables and 430 clauses, therefore it is reasonable to assume i.i.d. completion times. Each successful assignment yields a reward $R_{1,n}=1$.

\textbf{Completion Time Statistics:} The empirical reward rate as a function of the restart time for the data set is given in Figure \ref{fig:uf100}(a). It is observed that the controlled restarts are essential for optimal performance, in accordance with the power-law completion time distributions \citep{gomes1998boosting} and Theorem \ref{thm:opt-int}. This implies that our design is suitable for this scenario.

\textbf{Performance Results:} In this set of experiments, we used the UCB-RB Algorithm with $\alpha = 2.01$, $(1+\beta)^2/(1-\beta)=1.01$ and $\mathbb{T}=\{10^{-0.5+i\times 0.125}:i = 0,1,\ldots,8\}$. For initialization, the controller performed 40 trials for each $(k,t_l)$ decision. For comparison, we used Luby restart strategy with various hand-tuned base cutoff values as a benchmark \citep{luby1993optimal}. Note that without any prior knowledge, the performance of Luby restart strategy is hit-or-miss, depending on how close the chosen (guessed) base cutoff value is to the optimal restart time. The number of solved problem instances for different $\tau$ values are given in Figure \ref{fig:uf100}. We observe that the {UCB-RB} Algorithm learns the optimal restart strategy fast without any prior information, and its performance outperforms alternatives especially at large time horizons. On the other hand, Luby restart strategy, which requires the base cutoff value as an input, is prone to perform badly with inaccurate prior information. Even a genie provides a well-chosen base cutoff value to Luby restart strategy, it is outperformed linearly over $\tau$ by the UCB-RB Algorithm, which requires no prior information. 

\begin{figure}
    \centering
    \includegraphics[scale=0.5]{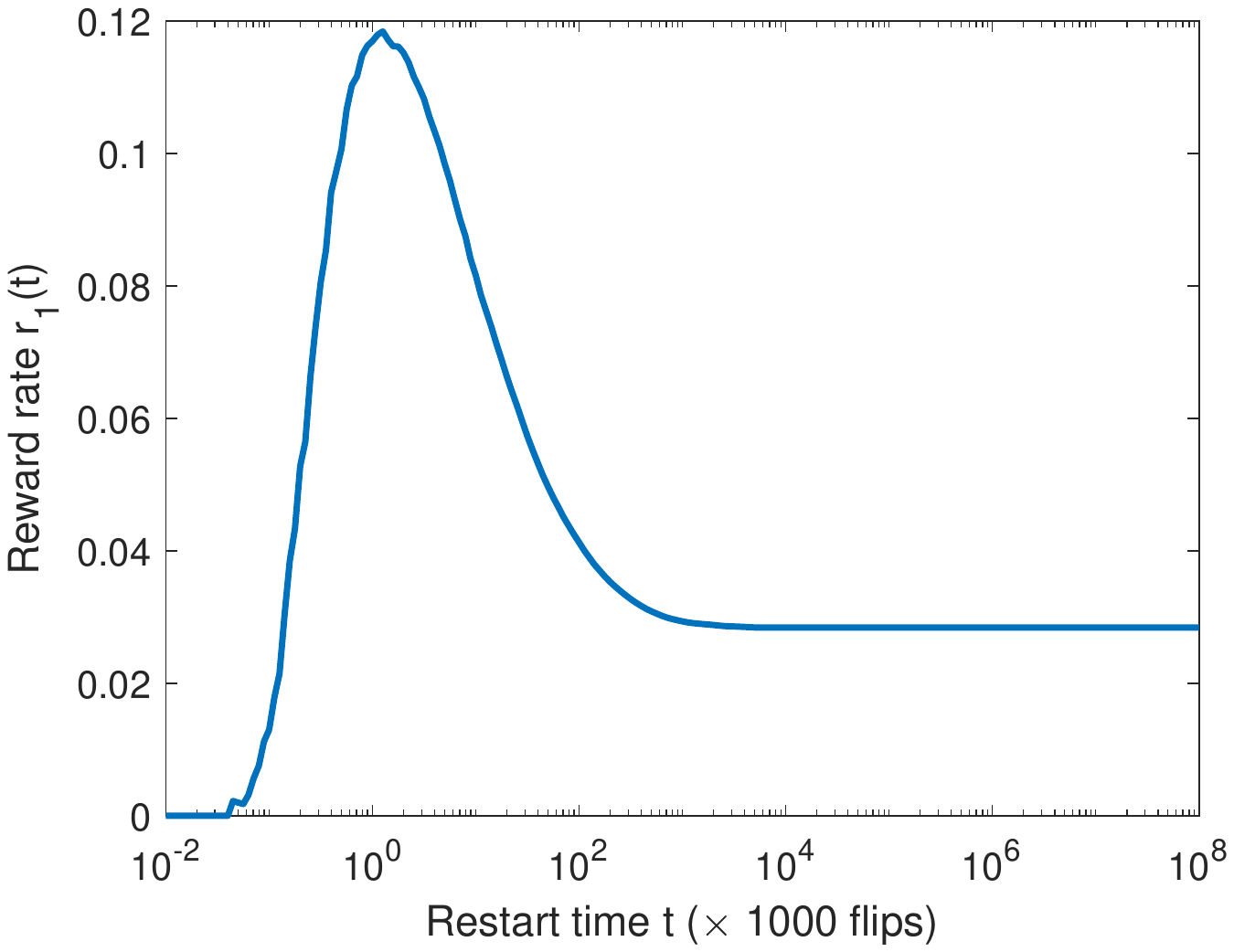}
    \includegraphics[scale=0.5]{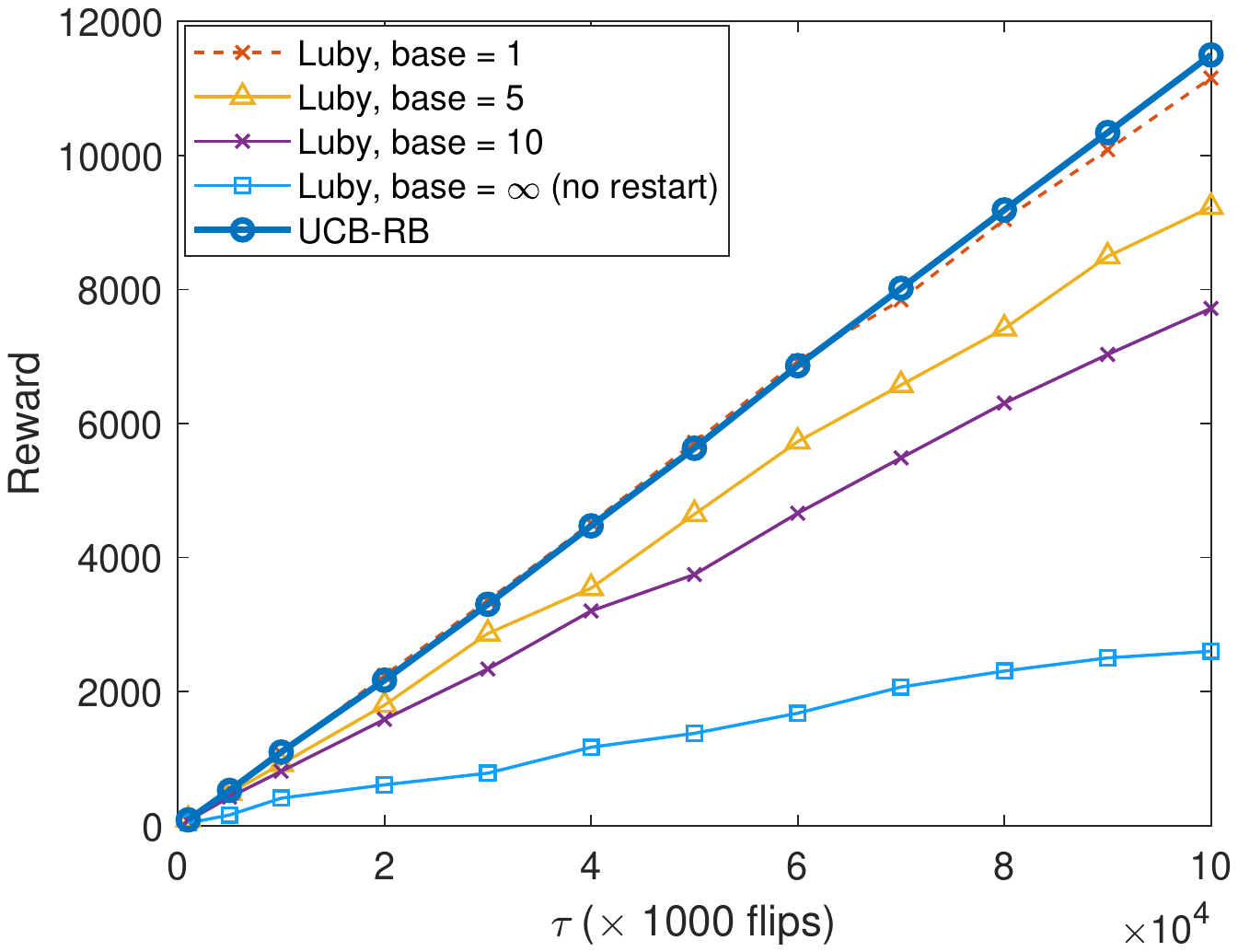}
    \caption{Performance of the restart strategies on the Random-3-SAT data set {\tt cbs-k3}.}
    \label{fig:uf100}
\end{figure}

\subsection{Random-3-SAT Instances with Controlled Backbone Size}
In this example, we will evaluate the performance of the UCB-RB Algorithm on a Random-3-SAT dataset with controlled backbone size. 

\textbf{Dataset Description:} We evaluated the performance of the meta-algorithms over the widely used Random-3-SAT benchmark set of satisfiable problem instances in \citep{hoos2000satlib}. In the dataset {\tt CBS\_k3\_n100\_m403\_b30}, there are 1000 uniformly random generated problem instances with 100 variables and 403 clauses with backbone size 30, therefore it is reasonable to assume i.i.d. completion times.

\textbf{Completion Time Statistics:} The empirical reward rate as a function of the restart time for the data set is given in Figure \ref{fig:cbs-k3}(a).

\textbf{Performance Results:}  In this set of experiments, we used the UCB-RB Algorithm with $\alpha = 2.01$, $(1+\beta)^2/(1-\beta)=1.01$ and $\mathbb{T}=\{10^{-0.5+i\times 0.125}:i = 0,1,\ldots,12\}$. Similar to the previous example, we used Luby restart strategy with various hand-tuned base cutoff values as a benchmark. The number of solved problem instances for different $\tau$ values are given in Figure \ref{fig:cbs-k3}. 
\begin{figure}
    \centering
    \includegraphics[scale=0.5]{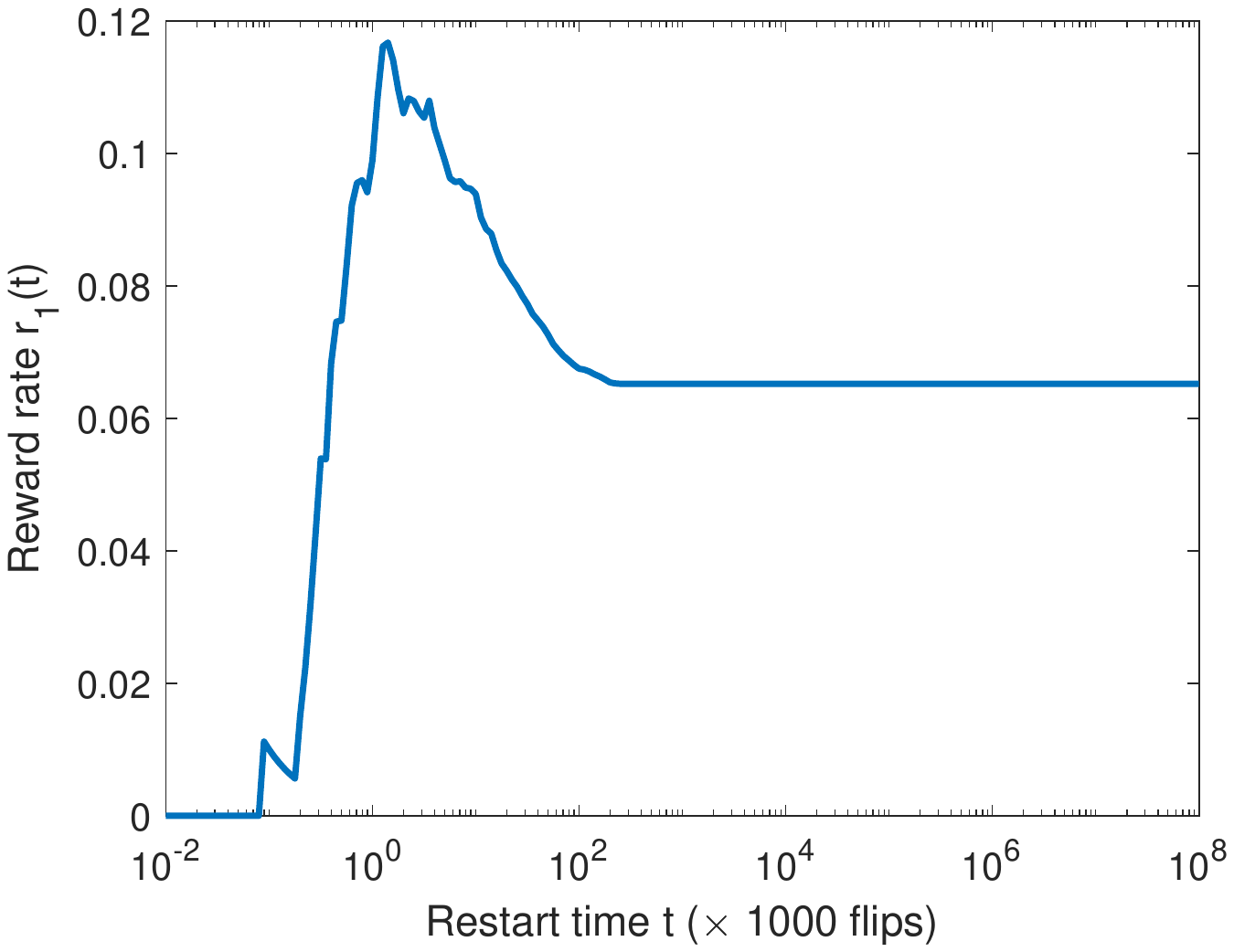}
    \includegraphics[scale=0.5]{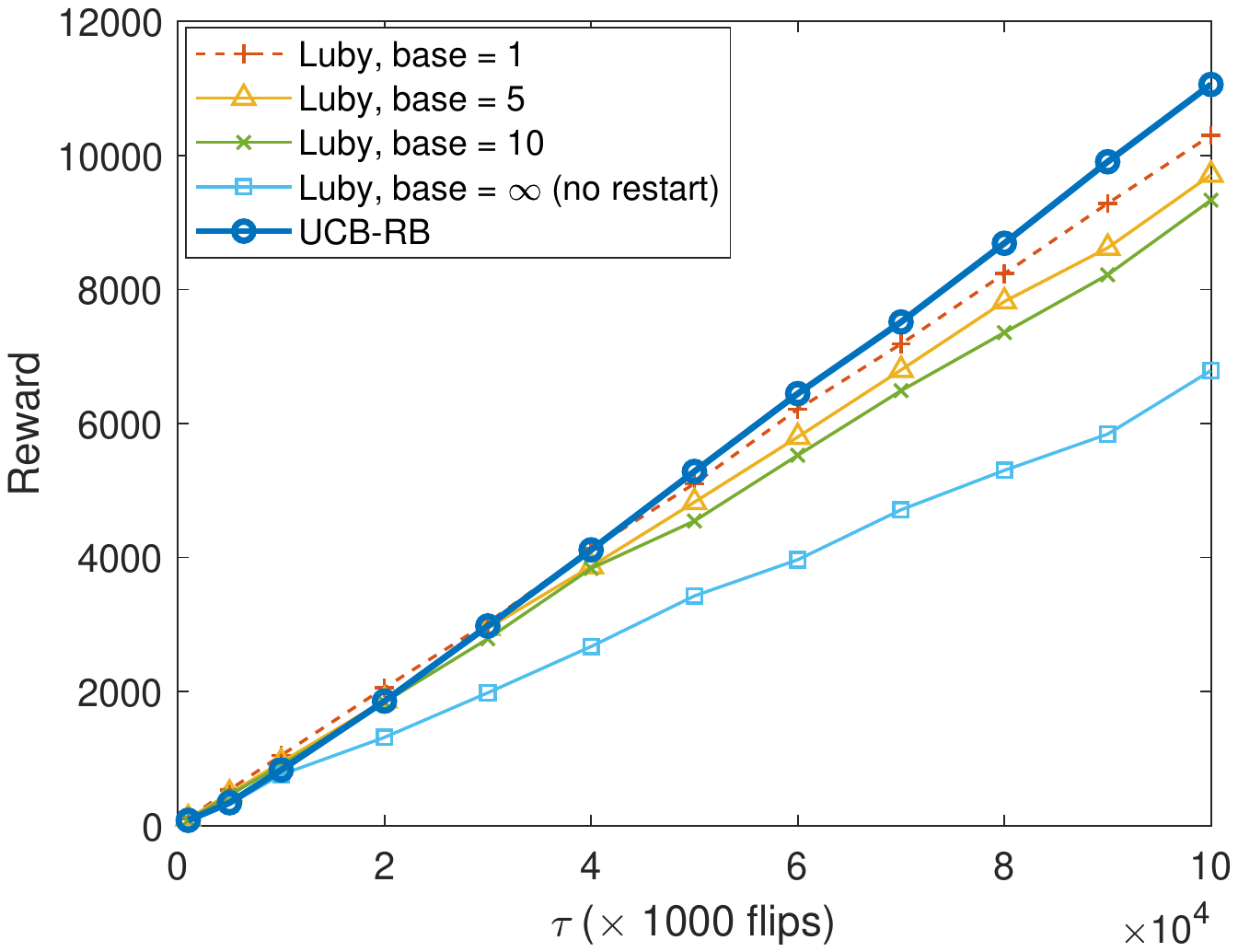}
    \caption{(Left) Empirical reward rate as a function of restart time for the dataset {\tt CBS\_k3\_n100\_m403\_b30}. Restart strategies lead to substantial gains on the number of problems solved per unit time. (Right) Performance of the UCB-RB Algorithm and Luby strategy for various base cutoff values.}
    \label{fig:cbs-k3}
\end{figure}
\noindent Figure \ref{fig:cbs-k3} indicates that the {UCB-RB} Algorithm learns the optimal restart strategy with low regret without any prior information. The performance gap between the UCB-RB Algorithm and Luby strategy increases linearly over $\tau$.

\section{Conclusions}
In this paper, we considered the continuous-time bandit learning problem with controlled restarts, and presented a principled approach with rigorous performance guarantees. For correlated and potentially heavy-tailed completion time and reward distributions, we proposed a simple, intuitive and near-optimal offline policy with $O(1)$ optimality gap, and characterized the nature of optimal restart strategies by using this approximation. For online learning, we considered discrete and continuous action sets, and proposed bandit algorithms that exploit the statistical structure of the problem to achieve tight performance guarantees. In addition to the theoretical analysis, we evaluated the numerical performance to boost the speed of SAT solvers in random 3-SAT instances, and observed that the learning solution proposed in this paper outperforms Luby restart strategy with no prior information.



\newpage

\appendix
\section{Concentration Inequalities for Reward Rate Estimation}
In this section, we will provide tight concentration inequalities that employ empirical estimates. The following lemma will provide a basis to design these concentration inequalities.

\begin{lemma}[Proposition 2, \citep{cayci2020budget}]\label{lem:rate-conc}
Consider a pair of parameters $\mu_X,\mu_R > 0$, and their estimators $\widehat{\mu}_X$ and $\widehat{\mu}_R$, respectively. Let $\widehat{r}= \widehat{\mu}_R/\widehat{\mu}_X$ and $r = \mu_R/\mu_X$. Then, for any $\beta \in (0,1)$, we have the following inequality:
\begin{equation}
    \p\Big(|\widehat{r}-r| > \frac{(1+\beta)^2}{1-\beta}\frac{\eta+\widehat{r}\cdot\epsilon}{\widehat{\mu}_X}\Big) \leq \p(|\mu_X-\widehat{\mu}_X|>\epsilon)+\p(|\mu_R-\widehat{\mu}_R| > \eta),
\end{equation}
\noindent for any $\eta \leq \beta\mu_R$,and $\epsilon \leq \beta\mu_X$.
\end{lemma}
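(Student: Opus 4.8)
The plan is to reduce the probabilistic claim to a deterministic inequality that holds on a high-probability ``clean event''. Define
$$\mathcal{G} = \big\{|\mu_X-\widehat{\mu}_X|\leq\epsilon\big\}\cap\big\{|\mu_R-\widehat{\mu}_R|\leq\eta\big\}.$$
A union bound immediately gives $\p(\mathcal{G}^c)\leq \p(|\mu_X-\widehat{\mu}_X|>\epsilon)+\p(|\mu_R-\widehat{\mu}_R|>\eta)$, which is precisely the right-hand side of the claimed inequality. Hence it suffices to prove the deterministic containment: whenever $\mathcal{G}$ holds, one has $|\widehat{r}-r|\leq \frac{(1+\beta)^2}{1-\beta}\frac{\eta+\widehat{r}\epsilon}{\widehat{\mu}_X}$. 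Equivalently, the event inside the probability on the left is contained in $\mathcal{G}^c$, and the bound then follows.

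For the deterministic step I would write $\widehat{r}-r$ over the common denominator $\widehat{\mu}_X$ and add and subtract $\mu_R$ in the numerator, using $\mu_R=r\mu_X$:
$$\widehat{r}-r = \frac{\widehat{\mu}_R-r\widehat{\mu}_X}{\widehat{\mu}_X}=\frac{(\widehat{\mu}_R-\mu_R)+r(\mu_X-\widehat{\mu}_X)}{\widehat{\mu}_X}.$$
On $\mathcal{G}$ the two numerator terms are controlled by $\eta$ and $r\epsilon$, so $|\widehat{r}-r|\leq \frac{\eta+r\epsilon}{\widehat{\mu}_X}$. This is already a clean bound, but it involves the unknown true ratio $r$; the remaining and main task is to replace $r$ by the observable $\widehat{r}$.

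The conversion from $r$ to $\widehat{r}$ is the crux of the argument and the step that generates the constant $\frac{(1+\beta)^2}{1-\beta}$. Here I would invoke the hypotheses $\epsilon\leq\beta\mu_X$ and $\eta\leq\beta\mu_R$: on $\mathcal{G}$ they upgrade the additive control into the multiplicative two-sided bounds $(1-\beta)\mu_X\leq\widehat{\mu}_X\leq(1+\beta)\mu_X$ and $(1-\beta)\mu_R\leq\widehat{\mu}_R\leq(1+\beta)\mu_R$. Pairing the lower bound on $\widehat{\mu}_R$ with the upper bound on $\widehat{\mu}_X$ yields $r=\mu_R/\mu_X\leq \frac{1+\beta}{1-\beta}\widehat{r}$. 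Substituting this into $|\widehat{r}-r|\leq\frac{\eta+r\epsilon}{\widehat{\mu}_X}$ and then bounding $1\leq\frac{(1+\beta)^2}{1-\beta}$ and $\frac{1+\beta}{1-\beta}\leq\frac{(1+\beta)^2}{1-\beta}$ (both valid for $\beta\in(0,1)$) collapses both terms under the single factor $\frac{(1+\beta)^2}{1-\beta}$, giving the claim. The only delicate bookkeeping is choosing, for each of $\widehat{\mu}_X$ and $\widehat{\mu}_R$, whether the lower or the upper multiplicative bound is needed so that $\widehat{\mu}_X$ survives intact in the denominator and each estimate is used in the correct direction; everything else is routine scalar manipulation.
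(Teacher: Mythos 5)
Your proof is correct and follows essentially the same route as the paper's: define the clean event, apply a union bound, and show the deterministic containment using the multiplicative bounds implied by $\epsilon\leq\beta\mu_X$ and $\eta\leq\beta\mu_R$. The only cosmetic difference is that you derive the intermediate bound $|\widehat{r}-r|\leq(\eta+r\epsilon)/\widehat{\mu}_X$ directly (the paper cites an external proposition for the variant with denominator $\mu_X-\epsilon$), and your algebra in fact yields the slightly sharper constant $\tfrac{1+\beta}{1-\beta}$ before relaxing to $\tfrac{(1+\beta)^2}{1-\beta}$.
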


\begin{proof}
    Let $$A(\epsilon,\eta) = \{|\mu_X-\widehat{\mu}_X|\leq \epsilon\}\cap\{|\mu_R-\widehat{\mu}_R|\leq \eta\},$$ be the high-probability event. Then, by Proposition 1 in \citep{cayci2020budget}, we have the following set relation: $$A(\epsilon,\eta) \subset \Big\{ |\widehat{r}-r| \leq \frac{\eta+r\epsilon}{\mu_X-\epsilon} \Big\}.$$ For any $\beta \in (0,1)$, if $\epsilon\leq\beta\mu_X$ and $\eta\leq\beta\mu_R$ is satisfied, then we have: $$\frac{\eta+r\epsilon}{\mu_X-\epsilon} \leq\frac{(1+\beta)^2}{1-\beta} \frac{\eta+\widehat{r}\epsilon}{\widehat{\mu}_X},$$ within the set $A(\epsilon,\eta)$. By taking the compliment and using union bound, we obtain the result. 
\end{proof}

By using Lemma \ref{lem:rate-conc}, we can prove tight concentration bounds for the renewal rate as follows.

\begin{proposition}[Concentration inequalities for reward rate]\label{prop:concentration}
Let $\{(X_n,R_n):n\geq 1\}$ be a renewal reward process.
\begin{enumerate}
    \item If $X_n\in[0,a]$ and $R_n\in [0,b]$, let $$\widehat{r}_n = \frac{\widehat{\E}_{[n]}[X]}{\widehat{\E}_{[n]}[R]},$$ and
    \begin{align*}
        \epsilon_n(\delta) &= \sqrt{\frac{2\mathbb{V}_{[n]}(X)\log(1/\delta)}{n}}+\frac{3a\log(1/\delta)}{n},\\
         \eta_n(\delta) &= \sqrt{\frac{2\mathbb{V}_{[n]}(R)\log(1/\delta)}{n}}+\frac{3b\log(1/\delta)}{n}.
    \end{align*}
    Then, for any $\beta \in (0,1)$ and $\delta \in (0,1)$, we have: $$\p\Big(|\widehat{r}_n-r| > \frac{(1+\beta)^2}{1-\beta}\cdot \frac{\eta_n(\delta)+\widehat{r}_n\epsilon_n(\delta)}{\widehat{\E}_{[n]}[X]}\Big) \leq 12\delta,$$ for any $n\geq 8\Big(\mathbb{K}(X_1)+\mathbb{K}(R_1)+\frac{a^2}{Var(X_1)}+\frac{b^2}{Var(R_1)}\Big) + 3\Big(\frac{\mathbb{C}^2(X_1)}{\beta^2} + \frac{a}{\beta\E[X_1]} + \frac{\mathbb{C}^2(R_1)}{\beta^2} + \frac{b}{\beta\E[R_1]}\Big)$.
    
\item Consider a renewal reward process such that $\E[X_1^4]<\infty$ and $\E[R_1^4]<\infty$. For $m = \lfloor 3.5\log(1/\delta)\rfloor + 1$, let $G_1,G_2,\ldots, G_m$ be a partition of $[n]$ such that $|G_j|=\lfloor n/m\rfloor$. Then, we define the median-based mean and variance estimators as follows:
\begin{align*}
    \mathbb{M}_{[n]}(X) &= {\tt median}\{ \widehat{\E}_{G_1}[X],\widehat{\E}_{G_2}[X],\ldots,\widehat{\E}_{G_m}[X] \},\\
    \mathbb{V}^{\tt M}_{[n]}(X) &= {\tt median}\{ \mathbb{V}_{G_1}[X],\mathbb{V}_{G_2}[X],\ldots,\mathbb{V}_{G_m}[X] \}.
\end{align*}
Let $$\widehat{r}_{n}^{\tt M} = \frac{\widehat{\E}_{[n]}^{\tt M}[R]}{\widehat{\E}_{[n]}^{\tt M}[X]},$$ and 
    \begin{align*}
        \epsilon_n^{\tt M}(\delta) &= 11\sqrt{\frac{2\mathbb{V}^{\tt M}_{[n]}(X)\log(1/\delta)}{n}},\\
         \eta_n^{\tt M}(\delta) &= 11\sqrt{\frac{2\mathbb{V}^{\tt M}_{[n]}(R)\log(1/\delta)}{n}}.
    \end{align*}
    Then, for any $\beta \in (0,1)$ and $\delta \in (0,1)$, we have: $$\p\Big(|\widehat{r}_n^{\tt M}-r| > \frac{(1+\beta)^2}{1-\beta}\cdot \frac{\eta_n^{\tt M}(\delta)+\widehat{r}_n^{\tt M}\epsilon_n^{\tt M}(\delta)}{\widehat{\E}^{\tt M}_{[n]}[X]}\Big) \leq 16.8\delta,$$ for $n \geq 1024\big(\mathbb{K}(X_1)+\mathbb{K}(R_1)+\frac{\mathbb{C}^2(X_1)+\mathbb{C}^2(R_1)}{\beta^2}+\zeta\big)$ for some $\zeta > 0$.
\end{enumerate}
\end{proposition}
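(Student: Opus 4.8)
The plan is to reduce both parts to scalar concentration for the numerator and denominator means via Lemma~\ref{lem:rate-conc}, which is the natural engine here. Matching the notation, I would set $\mu_X=\E[X_1]$, $\mu_R=\E[R_1]$, and take $\widehat{\mu}_X=\widehat{\E}_{[n]}[X]$, $\widehat{\mu}_R=\widehat{\E}_{[n]}[R]$ (so that the denominator of the rate is the $X$-mean, consistent with the lemma), with $\epsilon=\epsilon_n(\delta)$ and $\eta=\eta_n(\delta)$. The lemma then upper bounds the target probability by $\p(|\mu_X-\widehat{\mu}_X|>\epsilon_n(\delta))+\p(|\mu_R-\widehat{\mu}_R|>\eta_n(\delta))$, \emph{provided} the side conditions $\epsilon_n(\delta)\le\beta\mu_X$ and $\eta_n(\delta)\le\beta\mu_R$ hold. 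So in each part the work splits into (i) bounding the two mean-deviation tails with the prescribed radii, and (ii) verifying the two side conditions, which is where the explicit lower bounds on $n$ come from.

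For Part 1 I would bound each mean-deviation probability by the empirical Bernstein inequality of \citep{audibert2009exploration}, whose confidence radius is precisely $\sqrt{2\mathbb{V}_{[n]}(X)\log(1/\delta)/n}+3a\log(1/\delta)/n$ up to the logarithmic constant, and whose single two-sided statement controls both mean tails and the empirical-variance tail at a small multiple of $\delta$. The nonroutine point is that $\epsilon_n(\delta),\eta_n(\delta)$ are themselves random through $\mathbb{V}_{[n]}(X),\mathbb{V}_{[n]}(R)$, whereas the lemma demands deterministic smallness. I would therefore work on a good event on which the empirical variances are within a constant factor of the true variances; controlling $\mathbb{V}_{[n]}$ from above needs a fourth-moment control, which is exactly where the terms $\mathbb{K}(X_1)+a^2/Var(X_1)$ (and the $R$ analogues) in the sample-size bound enter. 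On that event the remaining terms $\mathbb{C}^2(X_1)/\beta^2+a/(\beta\E[X_1])$ in the requirement on $n$ force $\epsilon_n(\delta)\le\beta\mu_X$ and $\eta_n(\delta)\le\beta\mu_R$, so Lemma~\ref{lem:rate-conc} applies. A union bound over the two empirical Bernstein events and the two variance-control events then yields the claimed $12\delta$.

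For Part 2 the structure is identical, but unbounded support forces replacing empirical Bernstein by median-of-means. For each block $G_j$, Chebyshev gives that $\widehat{\E}_{G_j}[X]$ deviates from $\mu_X$ by more than a fixed multiple of $\sqrt{Var(X_1)\,m/n}$ with probability at most a constant $p_0<1/2$, and the calibration $m=\lfloor 3.5\log(1/\delta)\rfloor+1$ is exactly what a Chernoff bound on the number of bad blocks needs to push the median's failure probability below $\delta$; this also explains the constant $11$, which absorbs the Chebyshev and block-size factors relating the median deviation to $\sqrt{\mathbb{V}^{\tt M}_{[n]}\log(1/\delta)/n}$. The hypotheses $\E[X_1^4],\E[R_1^4]<\infty$ play the role the kurtosis terms played in Part 1: they bound the variance of the within-block empirical variances, so $\mathbb{V}^{\tt M}_{[n]}$ concentrates around the true variance and can be used both inside the radius and in verifying $\epsilon_n^{\tt M}\le\beta\mu_X$. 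The lower bound $n\ge 1024(\mathbb{K}(X_1)+\mathbb{K}(R_1)+(\mathbb{C}^2(X_1)+\mathbb{C}^2(R_1))/\beta^2+\zeta)$ is the analogue of the Part~1 condition, and a union bound over the four median estimators (means and variances of $X$ and $R$) gives $16.8\delta$.

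I expect the main obstacle to be the side-condition verification rather than the tail bounds themselves. Because the confidence radii are random and feed back into the requirement $\epsilon\le\beta\mu_X$, one must concentrate the (median-)empirical variance simultaneously from above and below and then trade this control against the sample size; this coupling is precisely what dictates the form of the explicit lower bounds on $n$ and the appearance of the kurtosis and coefficient-of-variation terms. Once the good events are set up, closing the union-bound bookkeeping into the constants $12\delta$ and $16.8\delta$ is careful but conceptually routine.
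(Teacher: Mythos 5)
Your proposal matches the paper's own proof: the paper likewise first concentrates the empirical variances (using the kurtosis-driven sample-size terms) to verify the deterministic side conditions $\epsilon_n(\delta)\le\beta\E[X_1]$, $\eta_n(\delta)\le\beta\E[R_1]$ of Lemma~\ref{lem:rate-conc} on a good event, then combines that lemma with the empirical Bernstein inequality of \citep{audibert2009exploration} and a union bound for Part~1, and repeats the identical steps with median-of-means concentration (citing \citep{minsker2015geometric} for what you derive via Chebyshev plus Chernoff over blocks) for Part~2. You correctly isolate the one nonroutine point---the randomness of the confidence radii feeding back into the side conditions---which is exactly the pivot of the paper's argument.
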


\begin{proof}
\begin{enumerate}
    \item If $n \geq 8\big(\mathbb{K}(X_1)+\frac{a^2}{Var(X_1)}\big)$, then the following holds with probability at least $1-4\delta$: $$|\mathbb{V}_{[n]}(X_1)-Var(X_1)| \leq Var(X_1)/2,$$ where $\mathbb{K}(X)=\frac{\E|X-\E X|^2}{Var^2(X)}$ is the kurtosis of a random variable $X$. Therefore, with probability at least $1-4\delta$, we have the following: 
    \begin{align*}
    \sqrt{\frac{3Var(X_1)\log(1/\delta)}{n}}+\frac{3a\log(1/\delta)}{n} &\leq \beta\E[X_1].
 \end{align*}
 Hence, if $n \geq 3\Big(\frac{\mathbb{C}^2(X_1)}{\beta^2} + \frac{a}{\beta\E[X_1]}\Big)$ also holds, then the above inequality is automatically satisfied, where $\mathbb{C}(X)=\sqrt{Var(X)}/\E[X]$ is the coefficient of variation. Therefore, if $$n\geq 8\Big(\mathbb{K}(X_1)+\mathbb{K}(R_1)+\frac{a^2}{Var(X_1)}+\frac{b^2}{Var(R_1)}\Big) + 3\Big(\frac{\mathbb{C}^2(X_1)}{\beta^2} + \frac{a}{\beta\E[X_1]} + \frac{\mathbb{C}^2(R_1)}{\beta^2} + \frac{b}{\beta\E[R_1]}\Big),$$ then we have
 \begin{align*}
     \epsilon_n(\delta) &\leq \beta\E[X_1],\\
     \eta_n(\delta) &\leq \beta\E[R_1],
 \end{align*}
 with probability at least $1-8\delta$. Then we use Lemma \ref{lem:rate-conc} in conjunction with empirical Bernstein inequality (see \citep{audibert2009exploration}) to conclude the proof.
 \item The proof follows from identical steps as Part 1, and uses Proposition 4.1 and Corollary 4.2 in \citep{minsker2015geometric} for the concentration results.
\end{enumerate}
\end{proof}

\section{Proof of Theorem \ref{thm:ucb-rb}}\label{app:ucb-rb}
The number of trials $N_\pi(\tau)$ under an admissible policy $\pi$ is a random stopping time, which makes the regret computations difficult. The following proposition provides a useful tool for regret computations.

    \begin{lemma}[Regret Upper Bounds for Admissible Policies]\label{lem:reg-dec}
Let $T_{k,l}(n)$ be the number of steps where the decision is $(k,t_l)$ in $n$ trials, and $\mu_* = \min_{k,t}\E[U_{k,1}(t_l)]$. The following upper bound holds for any admissible policy $\pi\in\Pi$ and $\tau > \mu_*/2$:
\begin{equation*}\label{eqn:reg-dec}
    {\tt REG}_\pi(\tau) \leq \sum_{k,l} \E\Big[T_{k,l}\Big(\frac{2\tau}{\mu_*}\Big)\Big]\Delta_{k,l}\E[U_{k,1}(t_l)] + \frac{\exp(-\tau\mu_*/t_1^2)}{1-\exp\big(\mu_*^2/(2t_1^2)\big)}\sum_{k,l} \Delta_{k,l}\E[U_{k,1}(t_l)]+r^*\Phi,
\end{equation*}
\noindent where $\Phi$ is a constant.
\end{lemma}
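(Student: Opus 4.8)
The plan is to reduce the regret to a weighted count of suboptimal decisions taken up to a \emph{deterministic} horizon $n_0 = 2\tau/\mu_*$, where the weight of decision $(k,t_l)$ is $D_{k,l} := \Delta_{k,l}\,\E[U_{k,1}(t_l)]$ with $\Delta_{k,l}=r^*-r_k(t_l)$. The first step is a pair of generalized Wald identities. Since the decision $(I_n,\nu_n)$ is $\mathcal F_{n-1}^\pi$-measurable by admissibility and the pair $(U_{I_n,n}(\nu_n),V_{I_n,n}(\nu_n))$ is independent of the past given the decision, both $\sum_{n\le N}\big(V_{I_n,n}(\nu_n)-\E[V_{I_n,1}(\nu_n)\mid\mathcal F_{n-1}^\pi]\big)$ and its analogue for $U$ are martingales evaluated at the stopping time $N=N_\pi(\tau)$. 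Optional stopping (justified by $\E[N]<\infty$, a standard renewal fact from $\E[U_{k,1}(t)]\ge\mu_*>0$, together with $V\in[0,1]$ and $\E[U]<\infty$) yields
\[
\E[{\tt REW}_\pi(\tau)] = \sum_{k,l}\E[T_{k,l}(N)]\,\E[V_{k,1}(t_l)] = \sum_{k,l}\E[T_{k,l}(N)]\,r_k(t_l)\E[U_{k,1}(t_l)],
\]
using the definition of the reward rate, and likewise $\sum_{k,l}\E[T_{k,l}(N)]\E[U_{k,1}(t_l)]=\E[S_N]\ge\tau$, since $S_N>\tau$ by definition of $N$.

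Combining these identities with the bound ${\tt OPT}(\tau)\le r^*(\tau+\Phi)$ from Proposition~\ref{prop:rew-opt}, and replacing $r^*\tau$ by $r^*\sum_{k,l}\E[T_{k,l}(N)]\E[U_{k,1}(t_l)]$ (legitimate because $\tau\le\E[S_N]$ and $r^*\ge0$), the rate-gap telescopes and I obtain
\[
{\tt REG}_\pi(\tau)\le\sum_{k,l}\E[T_{k,l}(N)]\big(r^*-r_k(t_l)\big)\E[U_{k,1}(t_l)]+r^*\Phi = \sum_{k,l}\E[T_{k,l}(N)]\,D_{k,l}+r^*\Phi.
\]
It then remains to trade the random $N$ for the deterministic $n_0$. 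Writing $\sum_{k,l}T_{k,l}(N)D_{k,l}=\sum_{n\ge1}D_{I_n,\nu_n}\I\{n\le N\}$ and splitting the sum at $n_0$, the low part is dominated deterministically by $\sum_{n\le n_0}D_{I_n,\nu_n}$, whose expectation is exactly $\sum_{k,l}\E[T_{k,l}(n_0)]D_{k,l}$. For the high part, the event $\{n\le N\}=\{S_{n-1}\le\tau\}$ lies in $\mathcal F_{n-1}^\pi$ and each single weight obeys $D_{I_n,\nu_n}\le\sum_{k,l}D_{k,l}$ (all weights are nonnegative as $r^*$ is the maximal rate), so
\[
\E\Big[\sum_{n>n_0}D_{I_n,\nu_n}\I\{n\le N\}\Big]\le\Big(\sum_{k,l}D_{k,l}\Big)\sum_{n>n_0}\p(N\ge n).
\]

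The crux is a lower-tail bound for $S_{n-1}$. I would lower-bound each time increment by the bounded variable $W_n:=\min\{X_{I_n,n},t_1\}\in[0,t_1]$, which by Assumption~\ref{assn:action-set} satisfies $\E[W_n\mid\mathcal F_{n-1}^\pi]\ge\mu_*$, and apply the Azuma--Hoeffding inequality to the martingale $\sum_{n\le m}\big(W_n-\E[W_n\mid\mathcal F_{n-1}^\pi]\big)$ whose increments have range at most $t_1$. Since $m\ge n_0$ forces $m\mu_*\ge2\tau$ and hence $m\mu_*-\tau\ge m\mu_*/2$, this gives $\p(N\ge n)=\p(S_{n-1}\le\tau)\le\exp\!\big(-\tfrac{\mu_*^2}{2t_1^2}(n-1)\big)$ for $n-1\ge n_0$. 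Summing this geometric tail over $n>n_0$ produces $\exp(-\tau\mu_*/t_1^2)/\big(1-\exp(-\mu_*^2/(2t_1^2))\big)$, which matches the claimed second term (the stated denominator should read $1-\exp(-\mu_*^2/(2t_1^2))$), completing the bound. The main obstacle is the rigorous justification of the two Wald identities under a random stopping horizon with adaptively chosen, non-identically-distributed, and possibly heavy-tailed time increments: one must establish the integrability and optional-stopping validity before the decomposition is meaningful. Once the regret is expressed through $\E[T_{k,l}(N)]$, the truncation at $n_0$ and the concentration step are routine.
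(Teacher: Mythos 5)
Your proposal is correct and follows essentially the route the paper sketches (and defers to \citep{cayci2019learning,cayci2020budget} for): a Wald-type decomposition of the stopped reward via optional stopping, substitution of $\tau\le\E[S_{N_\pi(\tau)}]$ together with Proposition~\ref{prop:rew-opt} to get the per-pull regret weights $\Delta_{k,l}\E[U_{k,1}(t_l)]$, truncation at the deterministic horizon $2\tau/\mu_*$, and an Azuma--Hoeffding tail bound on $S_{n-1}$ via the bounded minorants $\min\{X_{I_n,n},t_1\}$. Your observation that the stated denominator should read $1-\exp\big(-\mu_*^2/(2t_1^2)\big)$ is also correct; the exponent in the lemma as printed is missing a minus sign.
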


The proof of Lemma \ref{lem:reg-dec} relies on Azuma-Hoeffding inequality for controlled random walks, which can be found in \citep{cayci2019learning, cayci2020budget}. Note that $2\tau/\mu_*$ is a high-probability upper bound for the total number of pulls $N_\pi(\tau)$, and $\Delta_{k,l}\E[U_{k,1}(t_l)]$ is the average regret per pull for a decision $(k,t_l)$. Lemma \ref{lem:reg-dec} implies that the expected regret after $2\tau/\mu^*$ pulls is $O(1)$.

In the following lemma, we quantify the scaling effect of using empirical estimates.

\begin{lemma}
Under the UCB-RB Algorithm, we have the following upper bounds:
\begin{enumerate}[label=(\roman*)]
    \item If $k=k^*,t_l>t_{k^*}^*$ or $k\neq k^*,\forall l$, we have:
    \begin{equation*}
    \sum_{j=l}^L\mathbb{E}[T_{k,j}(N)]\leq 3\Big(z^2(\beta)\mathbb{C}_{k,j}\frac{r_*^2}{\Delta_{k,j}^2}+2z(\beta)\mathbb{B}_{k,j}\frac{r_*}{\Delta_{k,j}}+8\big(\mathbb{K}_{k,j}+\mathbb{B}_{k,j}^2\big)\Big)\log(N^\alpha)+O(L),
\end{equation*}
\item If $k=k^*$ and $t_l < t_{k^*}^*$, we have $\mathbb{E}[T_{k,l}(n)] = O(1)$ for all $n$.
\end{enumerate}
\label{lem:sample-complexity}
\end{lemma}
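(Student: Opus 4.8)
The plan is to run a \emph{clean-event} (optimistic) UCB analysis, exactly as in the proof of Theorem \ref{thm:ucb-rb}, with the twist that every confidence radius is built on the information-structured sample count $T^*_{k,l}(n)$ rather than $T_{k,l}(n)$. First I would fix a horizon $N$ and define the clean event as the intersection, over all stages $n \leq N$ and all pairs $(k,t_l)$, of the events
$$\big\{\, |\widehat{r}_{k,l,n} - r_k(t_l)| \leq c_{k,l,n} \,\big\}.$$
By Proposition \ref{prop:concentration}(1) applied with $\delta = n^{-\alpha}$ to the censored samples $\{(U_{k,i}(t_l),V_{k,i}(t_l)):i\in\mathcal{I}^*_{k,l}(n)\}$ --- which are i.i.d.\ and bounded in $[0,t_l]\times[0,1]$ --- each such event fails with probability $O(n^{-\alpha})$ once $T^*_{k,l}(n)$ exceeds the sample-size threshold of the proposition. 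A union bound over $n$ and over the $KL$ actions, together with $\alpha>2$, then shows that the complement of the clean event contributes only $O(KL)$ to the sample counts, matching the $O(L)$ per-arm slack in the statement.

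For part (i), I would work on the clean event. Since the optimal action's index satisfies $\widehat{r}_{k^*,l^*,n}+c_{k^*,l^*,n}\geq r_{k^*}(t_{l^*})=r_*$, any decision $(k,t_j)$ selected at stage $n+1$ must have index at least $r_*$; combining this with $\widehat{r}_{k,j,n}\leq r_k(t_j)+c_{k,j,n}$ gives the familiar inequality $2c_{k,j,n}\geq\Delta_{k,j}$. Substituting the explicit form of $c_{k,j,n}$ from \eqref{eqn:emp-radius} and the empirical-Bernstein radii, and using the clean-event control of the empirical variances $\mathbb{V}(U_k(t_j)),\mathbb{V}(V_k(t_j))$ by their true values, this inverts to $T^*_{k,j}(n)\leq M_{k,j}$, where $M_{k,j}=3\big(z^2(\beta)\mathbb{C}_{k,j}r_*^2/\Delta_{k,j}^2 + 2z(\beta)\mathbb{B}_{k,j}r_*/\Delta_{k,j}+8(\mathbb{K}_{k,j}+\mathbb{B}_{k,j}^2)\big)\log(n^\alpha)$; here $z(\beta)$ absorbs the $(1+\beta)^2/(1-\beta)$ factor of the plug-in estimates, $\mathbb{B}_{k,j}$ the linear-in-$t_j$ Bernstein range term, and $\mathbb{C}_{k,j},\mathbb{K}_{k,j}$ the variance and kurtosis terms. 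The crucial bookkeeping step is that a pull of $(k,t_j)$ increments $T_{k,j}$ and $T^*_{k,j}$ simultaneously (as $j\geq j$), so $T_{k,j}(n)\leq T^*_{k,j}(n)$ at the instant of selection; hence the $(M_{k,j}+1)$-th pull is impossible and $\E[T_{k,j}(N)]\leq M_{k,j}+O(1)$. Summing over $j=l,\ldots,L$ yields the stated bound.

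For part (ii), the asymmetry of the information structure does the work. When $k=k^*$ and $t_l<t_{k^*}^*=t_{l^*}$ we have $l<l^*$, so every pull of the \emph{optimal} decision $(k^*,t_{l^*})$ lies in $\mathcal{I}^*_{k^*,l}(n)$, giving $T^*_{k^*,l}(n)\geq T^*_{k^*,l^*}(n)\geq T_{k^*,l^*}(n)$. Once part (i), together with an induction over the finitely many restart times below $l^*$, shows every suboptimal decision is taken $O(\log N)$ times, the optimal decision is taken $T_{k^*,l^*}(n)=n-O(KL\log N)$ times, i.e.\ at a linear rate. Consequently $c_{k^*,l,n}$ shrinks at the fast rate $\Theta(\sqrt{\log n/n})$, so the selection requirement $2c_{k^*,l,n}\geq\Delta_{k^*,l}$ --- obtained as in part (i), with the $c_{k^*,l^*,n}$ terms cancelling when the two shared-sample indices are compared directly --- can hold only during an initial transient whose length does not grow with the horizon. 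This gives the uniform bound $\E[T_{k^*,l}(n)]=O(1)$.

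The main obstacle is twofold. The delicate part of (i) is that the radii $c_{k,j,n}$ are built from \emph{empirical} variances rather than the true ones, so inverting $2c_{k,j,n}\geq\Delta_{k,j}$ into a clean sample-count bound requires concentration of the empirical variance about its true value --- precisely the regime-$n$ condition of Proposition \ref{prop:concentration} --- and careful tracking of the resulting $z(\beta)$ and $\mathbb{B}_{k,j}$ factors. The delicate part of (ii) is making the ``transient'' rigorous: one must decouple the linear lower bound on $T_{k^*,l^*}(n)$ from the event under analysis, which is cleanest via the direct index comparison between $(k^*,t_l)$ and $(k^*,t_{l^*})$ on their common sample set, since this removes the dependence on $c_{k^*,l^*,n}$ and isolates $\Delta_{k^*,l}$ against $c_{k^*,l,n}$ alone.
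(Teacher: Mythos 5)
Your part (i) has a real gap: you bound each $\E[T_{k,j}(N)]$ separately by $M_{k,j}+O(1)$ and then sum over $j$, which yields $\sum_{j\geq l}M_{k,j}+O(L)$. The lemma claims something stronger: the \emph{sum} $\sum_{j=l}^L\E[T_{k,j}(N)]$ is bounded by a \emph{single} term of the form $\max_{j\geq l}M_{k,j}+O(L)$ (this is what the paper's proof establishes, via $\max_{j\geq l}u_j$), and this stronger form is exactly what produces the $\tilde{\xi}_k$ coefficient in Theorem \ref{thm:ucb-rb} and eliminates the linear-in-$L$ dependence. Your argument only uses the information structure in the trivial direction $T_{k,j}(n)\leq T_{k,j}^*(n)$; the missing idea is to use the identity $T_{k,j}^*(n)=\sum_{j'\geq j}T_{k,j'}(n)$ in the other direction. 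Concretely, with $A_j=\{T_{k,j}^*(n)\leq u_j\}$ one writes $\cup_{j\geq l}A_j = A_L\cup(A_{L-1}\cap A_L^c)\cup\cdots$ and observes $A_j\cap A_{j+1}^c\subset\{T_{k,j}(n)\leq (u_j-u_{j+1})_+\}$, because the samples counted by $T_{k,j+1}^*$ are a subset of those counted by $T_{k,j}^*$; telescoping gives $\cup_{j\geq l}A_j\subset\{\sum_{j\geq l}T_{k,j}(n)\leq\max_{j\geq l}u_j\}$. Without this decomposition your route recovers only the $\xi_k$ (sum) bound, not the $\min\{\xi_k,\tilde{\xi}_k\}$ of the theorem, so the statement as written is not proved. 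A secondary omission: since $T_{k,j}^*(n)$ is a random sample size, the concentration of Proposition \ref{prop:concentration} must be union-bounded over all possible sample sizes $s\leq n$ (costing a factor $n$, whence the $n^{-(\alpha-1)}$ tail and the requirement $\alpha>2$); your clean event as stated glosses over this.

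Your part (ii) follows the paper's argument in substance (all strictly suboptimal decisions are pulled $O(\log N)$ times, hence $T_{k^*,l^*}(N)=\Omega(N-\log N)$, hence $T_{k^*,l}^*(N)\geq T_{k^*,l^*}(N)$ grows linearly for $l<l^*$ and the radius $c_{k^*,l,n}$ decays too fast for $2c_{k^*,l,n}\geq\Delta_{k^*,l}$ to persist), and is acceptable modulo the same rigor the paper itself delegates to the cited references; but it depends on part (i), so the gap above must be closed first.
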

\begin{proof}
Consider a suboptimal decision $(k,t_l)$ where either $\{k=k^*, t_l > t_k^*\}$ or $\{k\neq k^*\}$ is true, and let
\begin{align}
    E_{1,n} &= \Big\{\widehat{r}_*+c_* \leq r_*\Big\},\\
    E_{2,n} &= \bigcup_{j\geq l}\Big\{\widehat{r}_{k,j,n}\geq c_{k,j,n} + r_k(t_j)\Big\},\\
    E_{3,n} &= \bigcup_{j\geq l}\{2c_{k,j,n} \geq \Delta_{k,j}\}.
\end{align}
Then, it is trivial to show by using contradiction that $\{(I_{n+1},\nu_{n+1}) = (k,t_j):j\geq l\} \subset \cup_{i=1}^3E_{i,n}$. In the following, we provide a sample complexity analysis for the events above.

For notational simplicity, for $j\geq l$, let $U_n:=U_{k,n}(t_j)$ and $V_n:=V_{k,n}(t_j)$ for all $n$. For sample size $s$ and any $\delta \in(0,1)$, let $\widehat{r}_s = \frac{\widehat{\E}_{[s]}[V]}{\widehat{\E}_{[s]}[U]}$, and
\begin{align}
\epsilon_s &= \sqrt{\frac{2\mathbb{V}_{[s]}(U)\log(1/\delta)}{s}}+\frac{3t_j\log(1/\delta)}{s},\\
\eta_s &= \sqrt{\frac{2\mathbb{V}_{[s]}(V)\log(1/\delta)}{s}}+\frac{3\log(1/\delta)}{s}.
\end{align}
\noindent Then, by Proposition \ref{prop:concentration}, the following holds with probability at least $1-4\delta$: \begin{equation}\Big|\widehat{r}_s-r_k(t_j)\Big| \leq \frac{(1+\beta)^2}{1-\beta}\frac{\eta_s+\widehat{r}_s\epsilon_s}{{\widehat{\E}_{[s]}[U]}},
\label{eqn:concentration-0}\end{equation} given $s$ is sufficiently large such that 
\begin{align}
    \begin{aligned}
        \epsilon_s &\leq \beta\E[U_1], \\
        \eta_s&\leq \beta\E[V_1].
    \end{aligned}
    \label{eqn:stability-condition}
\end{align}
By using empirical Bernstein inequality and union bound for the empirical variance, we have the following inequalities with probability at least $1-8\delta$: 
\begin{align*}
    |\mathbb{V}_{[s]}(U)-Var(U_1)|\leq Var(U_1)/2,\\
    |\mathbb{V}_{[s]}(V)-Var(V_1)|\leq Var(V_1)/2,
\end{align*}if $s \geq s_{1,j}(\delta) := 8\big(\mathbb{K}(U_1)+\mathbb{K}(V_1)+\frac{4t_j^2}{Var(U_1)}+\frac{1}{Var(V_1)}\Big)\log(1/\delta)$. Therefore, the condition \eqref{eqn:stability-condition} implies the following with probability at least $1-8\delta$ for $s \geq s_{1,j}(\delta)$:
\begin{align}
    \begin{aligned}
    \tilde{\epsilon}_s&:=\sqrt{\frac{3Var(U_1)\log(1/\delta)}{s}}+\frac{3t_j\log(1/\delta)}{s} \leq \beta\E[U_1],\\
 \tilde{\eta}_s&:=\sqrt{\frac{3Var(V_1)\log(1/\delta)}{s}}+\frac{3\log(1/\delta)}{s} \leq \beta\E[V_1].
 \end{aligned}
 \label{eqn:stability-condition-2}
\end{align}
The inequalities in \eqref{eqn:stability-condition-2} simultaneously hold if $$s \geq s_{2,j}(\delta) := 3\Big(\frac{1}{\beta^2}\big(\mathbb{C}^2(U_1)+\mathbb{C}^2(V_1)\big) + \frac{1}{\beta}\big(\frac{t_j}{\E[U_1]}+\frac{1}{\E[V_1]}\big)\Big)\log(1/\delta).$$ Also, note that $$\frac{(1+\beta)^2}{1-\beta}\frac{\eta_s+\widehat{r}_s\epsilon_s}{{\widehat{\E}_{[s]}[U]}} \leq \frac{(1+\beta)^2}{(1-\beta)^3}\frac{\tilde{\eta}_s+r_k(t_j)\tilde{\epsilon}_s}{{\E[U]_1}},$$ holds if \eqref{eqn:stability-condition-2} is true. Thus, by using this result, we show that $$2\frac{(1+\beta)^2}{1-\beta}\frac{\eta_s+\widehat{r}_s\epsilon_s}{{\widehat{\E}_{[s]}[U]}} \leq 2\frac{(1+\beta)^2}{(1-\beta)^3}\frac{\tilde{\eta}_s+r_k(t_j)\tilde{\epsilon}_s}{{\E[U]_1}} \leq \Delta_{k,j},$$ with probability at least $1-8\delta$ if $s \geq \max\{s_{1,j}(\delta),s_{2,j}(\delta), s_{3,j}(\delta)\}$ for $$s_{3,j}(\delta) = 3\Big(8\frac{(1+\beta)^4}{(1-\beta)^6}\frac{r_*^2}{\Delta_{k,j}^2}(\mathbb{C}^2(U_1)+\mathbb{C}^2(V_1))+2\frac{(1+\beta)^2}{(1-\beta)^3}(\frac{1}{\E[V_1]}+\frac{2t_j}{\E[U_1]})\Big)\log(1/\delta).$$ In summary, the following events simultaneously hold with probability at least $1-12\delta$:
\begin{align}
    \begin{aligned}
    \Big|\widehat{r}_s-r_k(t_j)\Big| &\leq \frac{(1+\beta)^2}{1-\beta}\frac{\eta_s+\widehat{r}_s\epsilon_s}{{\widehat{\E}_{[s]}[U]}}\leq \Delta_{k,j}/2,
    \end{aligned}
    \label{eqn:concentration}
\end{align}
if $s \geq \max\{s_{1,j}(\delta), s_{2,j}(\delta), s_{3,j}(\delta)\}$.

Note that the UCB-RB Algorithm is designed such that $s=T_{k,j}^*(n)$ and $\delta = n^{-\alpha}$. Now we will use the above analysis to provide an upper bound for $\sum_{j\geq l}\E[T_{k,j}(n)]$. First, let $$A_j = \{T_{k,j}^*(n) \leq u_j\},~j\geq l$$ for $u_j=\max_i\{s_{i,j}(N^{-\alpha})\}$. Then, by \eqref{eqn:concentration-0} and \eqref{eqn:concentration}, we have the following inequality:
\begin{equation}
    \p\Big(\big(\cup_{j\geq l}A_j\big)\cap\big(E_{1,n}\cup E_{2,n}\cup E_{3,n}\big)\Big) \leq 16(L-l)/n^{\alpha-1},
    \label{eqn:decomposition}
\end{equation}
where we used union bound to deal with the random sample size $T_{k,j}*(n)\leq n$ in computing probabilities. Now, recall that $T_{k,j}^*(n) = \sum_{j^\prime\geq j}T_{k,j^\prime}(n)$ by definition, and we have the following relation: 
\begin{equation}
    \bigcup_{j\geq l}A_j \subset \Big\{\sum_{j\geq l}T_{k,j}(n) \leq \max_{j\geq l}u_j\Big\},
    \label{eqn:decomposition-2}
\end{equation} which follows from the fact that $\cup_{j\geq l}A_j = A_L\cup (A_{L-1}\cap A_L^c)\cup\ldots\cup (A_l\cap (\cup_{l< j \leq L}A_j)^c)$, and $$A_j\cap (\cup_{j^\prime > j}A_{j^\prime})^c\subset A_j\cap A_{j+1}^c \subset \{T_{k,j} \leq \max\{0,u_j-u_{j+1}\}\}.$$ Therefore, we have the following inequality: $$\sum_{j\geq l}T_{k,l}(N) \leq \max_{j \geq l}u_j + \sum_{i=\max_{j\geq l}u_j+1}^N\mathbb{I}\{E_{1,i}\cup E_{2,i}\cup E_{3,i}\}.$$ Taking the expectation in the above equality, and using \eqref{eqn:decomposition} and \eqref{eqn:decomposition-2}, we have the following result: $$\E[\sum_{j\geq l}T_{k,l}(N)] \leq \max_{j\geq l}u_j + \frac{16(L-l)\alpha}{\alpha-2},$$ which yields the result in part (i).

For part (ii), let $(k, t_l)$ be such that $k = k^*$ and $t_l < t_k^*$. Following a similar analysis as part (i) yields $\E[T_{k,l}(N)] \leq O(\log(N^\alpha))$, which implies that $\E[T_{k^*,l_{k^*}^*}(N)] = \Omega(N-\log(N))$. Therefore, since the number of samples satisfies $T_{k,l}^*(N) \geq T_{k^*,l_{k^*}^*}(N) = \Omega(N-\log(N))$, the decision $(k^*, t_l)$ is chosen at most $O(1)$ times \citep{lattimore2014bounded, cayci2017learning}.

\end{proof}

\section{Proof of Theorem \ref{thm:ucb-rc}}\label{app:ucb-rc}
The proof incorporates a variant of the regret analysis for quantized continuous decision sets given in \citep{combes2014unimodal} into the regret analysis for budget-constrained bandits presented in Appendix \ref{app:ucb-rb}.

\textbf{Step 1.} First, we bound the regret that stems from using a quantized decision set. Recall from Proposition \ref{prop:rew-opt} that $${\tt OPT}(\tau) \leq \tau\cdot r_*+O(1),$$ and $${\tt OPT}_Q(\tau) \geq \tau \max_{t\in\mathbb{T}_Q}~r_1(t),$$ where ${\tt OPT}_Q(\tau)$ is the optimal reward in the quantized decision set $\mathbb{T}_Q$, and $r_*=\max_{t\in\mathbb{T}}~r_1(t)$. Then, the regret under the UCB-RC Algorithm is bounded as follows:
\begin{align*}
    {\tt REG}_{\tt \pi^C}(\tau) &= {\tt OPT}(\tau)-\E[{\tt REW}_{\tt \pi^C}(\tau)],\\
    &={\tt OPT}(\tau)-{\tt OPT}_{Q}(\tau)+{\tt OPT}_{Q}(\tau)-\E[{\tt REW}_{\tt \pi^C}(\tau)],\\
    &\leq \tau\big(r_*-\max_{t\in\mathbb{T}_Q}~r_1(t)\big)+{\tt REG}_{{\tt \pi^C},Q}(\tau)+O(1),
\end{align*}
where $${\tt REG}_{{\tt \pi^C},Q}(\tau) = {\tt OPT}_Q(\tau)-\E[{\tt REW}_{\tt \pi^C}(\tau)],$$ is the regret under the UCB-RC Algorithm with respect to the optimal policy in the quantized decision set. By Assumption \ref{assn:ucb-rc}, we have: $$r_*-\max_{t\in\mathbb{T}_Q}~r_1(t) \leq a_2\delta^q,$$ since $|t_1^*-\arg\max_{t\in\mathbb{T}_Q}~r_1(t)| \leq \delta$. Thus, we have the following inequality:
\begin{equation}
    {\tt REG}_{\tt \pi^C}(\tau) \leq a_2\tau \delta^q + {\tt REG}_{{\tt \pi^C},Q}(\tau) + O(1).
    \label{eqn:ucb-rc-decomposition-1}
\end{equation}

\textbf{Step 2.} After we quantify the regret from using a quantized decision set, now we bound ${\tt REG}_{{\tt C},Q}(\tau)$, the regret of the UCB-RC Algorithm with respect to the optimal algorithm in the quantized decision set. We first present a variation of the decomposition in \eqref{eqn:decomposition-2}.

\begin{claim}
    Let $l^*=\arg\max_j~r_1(t_j)$, and $r_{Q}^*=\max_j~r_1(t_j)$ be the optimal reward rate in the quantized decision set $\mathbb{T}_Q$. For any $l\neq l^*$, let $\Delta_{1,l}=r_{Q}^*-r_1(t_l)$, and
    \begin{equation}
        z_l = 3\Big(z^2(\beta)\mathbb{C}_{1,l}\frac{(r_Q^*)^2}{\Delta_{1,l}}+O(1)\Big)\log(N^\alpha).
        \label{eqn:z-l}
    \end{equation}
    Then, we have the following for any $l\leq k \leq L(\delta)$:
    \begin{equation}
        \E[\sum_{j=l,j\neq l^*}^k\Delta_{1,j}T_{1,j}(N)]\leq z_l + \sum_{\substack{j=l+1\\j\neq l^*}}^k z_{j}\Big(1-\frac{\Delta_{1,j-1}}{\Delta_{1,j}}\Big)^+ + \frac{16L(\delta)\alpha}{\alpha-2}.
    \end{equation}
    \label{claim:ucb-rc}
\end{claim}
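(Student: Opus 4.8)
The plan is to prove Claim~\ref{claim:ucb-rc} as the weighted, two-sided refinement of the sample-count decomposition \eqref{eqn:decomposition-2} from the proof of Theorem~\ref{thm:ucb-rb}, with the right-censored information structure $T_{1,j}^*(N)=\sum_{j'\ge j}T_{1,j'}(N)$ of Section~\ref{subsec:info-structure} doing the essential work. I would first set up the clean-event bookkeeping of Lemma~\ref{lem:sample-complexity}: for each suboptimal restart time $t_j$ ($j\neq l^*$) put $u_j:=z_j/\Delta_{1,j}$, so that $\Delta_{1,j}u_j=z_j$ and $u_j$ is of order $z^2(\beta)\mathbb{C}_{1,j}(r_Q^*)^2\Delta_{1,j}^{-2}\log(N^\alpha)$. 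The concentration estimates of Proposition~\ref{prop:concentration}, together with the inclusion $\{(I_{n+1},\nu_{n+1})=(1,t_j)\}\subset E_{1,n}\cup E_{2,n}\cup E_{3,n}$, show that on the clean event the decision $(1,t_j)$ is made at step $n+1$ only while $T_{1,j}^*(n)\le u_j$; the complementary (non-clean) steps are handled by the same union bound as in Theorem~\ref{thm:ucb-rb}.

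The computation then splits at $l^*$ according to the unimodal geometry, which controls the sign of $\Delta_{1,j}-\Delta_{1,j-1}$. On the right branch $\{j>l^*\}$ the cumulative count $T_{1,j}^*(N)$ contains no plays of the optimum, and since $\Delta_{1,j}$ is increasing there the thresholds $u_j$ are decreasing, so the smallest such index dies last and caps $T_{1,j}^*(N)\le u_j+O(1)$ on the clean event. A summation by parts using $T_{1,j}(N)=T_{1,j}^*(N)-T_{1,j+1}^*(N)$ then gives
\begin{equation*}
\sum_{j>l^*}\Delta_{1,j}T_{1,j}(N)\le \sum_{j>l^*}(\Delta_{1,j}-\Delta_{1,j-1})^+\,u_j=\sum_{j>l^*}z_j\Big(1-\tfrac{\Delta_{1,j-1}}{\Delta_{1,j}}\Big)^+,
\end{equation*}
which is exactly the correction sum in the claim; the truncation $(\cdot)^+$ and the disappearance of the $j=l^*$ term come for free from $\Delta_{1,l^*}=0$.

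On the left branch $\{l\le j<l^*\}$ the information structure acts in the opposite, consolidating direction: here $T_{1,j}^*(N)$ is dominated by the $\Theta(N)$ plays of the near-optimal restart times, so each left arm is eliminated as soon as its accumulated sample count crosses $u_j$, and because $\Delta_{1,j}$ is decreasing (thresholds $u_j$ increasing) toward $l^*$ the binding constraint is the furthest-left index $l$. Bounding $\Delta_{1,j}\le\Delta_{1,l}$ and telescoping the left increments against the smallest threshold collapses the entire left contribution into the single term $\Delta_{1,l}u_l=z_l$. Adding the failure contribution, which by the per-step failure probability bound $16/n^{\alpha-1}$, a union bound over the $L(\delta)$ restart times, and $\sum_{n\ge1}n^{-(\alpha-1)}\le \alpha/(\alpha-2)$ (valid since $\alpha>2$) is at most $16L(\delta)\alpha/(\alpha-2)$, and taking expectations yields the stated inequality.

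I expect the main obstacle to be this left-branch consolidation. The naive per-decision bound $T_{1,j}(N)\le u_j+1$ only yields the non-telescoped total $\sum_{j<l^*}z_j$, so one must argue carefully that, because the heavily-played optimum feeds samples to every smaller restart time simultaneously, the smaller restart times are driven out in a coordinated fashion whose weighted count telescopes to $z_l$ rather than summing --- all while ensuring the optimum itself (with $\Theta(N)$ plays) never enters the regret sum, so that a single summation by parts over the whole range $[l,k]$ cannot be used directly. Marrying this left-branch argument to the clean right-branch telescoping, across the interior optimum $l^*$ rather than the one-sided situation of Theorem~\ref{thm:ucb-rb}, is the delicate part of the argument.
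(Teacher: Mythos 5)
Your overall architecture matches the paper's: the same clean events $E_{1,n},E_{2,n},E_{3,n}$, the same thresholds $u_j=z_j/\Delta_{1,j}$ inherited from Lemma~\ref{lem:sample-complexity}, the same union bound producing the $16L(\delta)\alpha/(\alpha-2)$ term, and your Abel summation on the right branch ($j>l^*$, where $\Delta_{1,j}$ is nondecreasing, $T_{1,j}(N)=T^*_{1,j}(N)-T^*_{1,j+1}(N)$, and $T^*_{1,j}(N)$ contains no plays of $t_{l^*}$) is exactly the content of the induction the paper leaves implicit; that part is fine.

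The gap is the left-branch consolidation, precisely where you flag it. The mechanism you invoke --- ``the heavily-played optimum feeds samples to every smaller restart time simultaneously'' --- is true but does not deliver $\sum_{l\le j<l^*}\Delta_{1,j}T_{1,j}(N)\le z_l$. The right-censored information structure propagates samples only downward: a pull with restart time $t_j$ increments $T^*_{1,j'}$ only for $j'\le j$. Hence pulls of $t_l$ contribute nothing to $T^*_{1,j}$ for $j>l$, and before $t_{l^*}$ accumulates its $\Theta(N)$ plays the left-branch restart times can be eliminated one at a time from left to right, each consuming its own budget: with $u_l\ll u_{l+1}$ the algorithm may pull $t_l$ about $u_l$ times (leaving $T^*_{1,l+1}$ at its initialization value, so $t_{l+1}$ still has a maximal index) and then pull $t_{l+1}$ about $u_{l+1}$ times, yielding a weighted count of order $\sum_{l\le j<l^*}z_j$ rather than $z_l$. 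Your ``binding constraint is the furthest-left index'' argument goes the wrong way: the failure of $A_l$ eliminates only $t_l$, not the times to its right, whose counters it never fed. Two repairs are available: (a) settle for $\sum_{j<l^*}z_j$ on the left branch, which is all that is actually used downstream in the proof of Theorem~\ref{thm:ucb-rc}, where the factors $(1-\Delta_{1,j-1}/\Delta_{1,j})^+$ are dropped and each $l\in B$ contributes its own $z_l$-type term to the sum $\sum_l l^{-q}$; or (b) invoke part (ii) of Lemma~\ref{lem:sample-complexity}: since $T^*_{1,j}(N)\ge T_{1,l^*}(N)=\Omega(N-\log N)$ for $j<l^*$, each such restart time is chosen only $O(1)$ times and the entire left contribution is absorbed into the additive $O(L(\delta))$ term. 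Either way, the step as you describe it does not close.
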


\begin{proof}
We have the following relation for any $l\leq k \leq L(\delta)$:
    \begin{equation}
        \cup_{j=l}^kA_j \subset \Big\{\sum_{j= l, j\neq l^*}^k\Delta_{1,j}T_{1,j}(N) \leq z_l + \sum_{\substack{j=l+1\\j\neq l^*}}^k z_{j}\Big(1-\frac{\Delta_{1,j-1}}{\Delta_{1,j}}\Big)^+\Big\},
    \end{equation}
    which can be proved by induction. Note that the $O(1)$ term in the RHS of \eqref{eqn:z-l} is bounded as follows: \begin{align*}
        2z(\beta)\mathbb{B}_{1,l}r_Q^*+8\big(\mathbb{K}_{1,l}+\mathbb{B}_{1,l}^2\big)\Delta_{1,l}\leq 2z(\beta)b_0r_*+8r_*(\kappa + b_0^2),
    \end{align*}
where $b_0 \geq \max_l~B_{1,l}$ and $\kappa\geq \max_l~\mathbb{K}_{1,l}$ are constants independent of $\delta$, but depend on ${\tt rad}(\mathbb{T})$, $\epsilon$ and $\mu_*$ under Assumption \ref{assn:ucb-rc}.
    
    By following the same steps as Lemma \ref{lem:sample-complexity}, the proof follows.
\end{proof}



By \eqref{eqn:ucb-rc-decomposition-1} and Proposition \ref{lem:reg-dec}, the regret under UCB-RC is bounded as follows:
\begin{equation}
    {\tt REG}_{\tt C}(\tau) \leq a_2\delta^q\tau+\mu\sum_{l\in [L(\delta)]}\E[\Delta_{1,l}T_{1,l}(2\tau/\mu_*)] + O\big(L(\delta)\big).
    \label{eqn:reg-c}
\end{equation}
\noindent where $\mu = \max_{t\in\mathbb{T}}\E[U_{1,1}(t)]$ and $\mu_*=\min_t\E[U_{1,1}(t)]$.

 Let the sets $A, B, D$ be defined as follows:
\begin{align*}
    A &= \{l^*-1,l^*,l^*+1\},\\
    B &= \{l:t_{min}+(l-1)\delta \in \mathcal{B}(t_1^*,\delta_0)\}\cap A^c, \\
    D &=[L(\delta)]\cap (A\cup B)^c,
\end{align*}
where $\mathcal{B}(x,\epsilon_0)$ denotes the ball in $\mathbb{R}$ centered at $x$ with radius $\epsilon_0>0$.

\begin{itemize}
    \item For any $l\in A\backslash\{l^*\}$, we have the following by Assumption \ref{assn:ucb-rc}: $$\Delta_{1,l} \leq r_*-r_1(t_l) \leq a_2(2\delta)^q,$$ since $r_* \geq r_Q^*$ and $\Big|t_1^*-\big(t_{min}+(l-1)\delta\big)\Big|\leq 2\delta$. Thus, we have: 
    \begin{equation}
        \sum_{l\in A\backslash\{l^*\}}\E\Big[\Delta_{1,l}T_{1,l}\Big(\frac{2\tau}{\mu_*}\Big)\Big] \leq a_2\mu\cdot \frac{2\tau}{\mu_*}(2\delta)^q.
        \label{eqn:reg-a}
    \end{equation}
    
    \item For $l\in B$, note that $$\Big| \big(t_{min}+(l^*-1)\delta\big) - \big(t_{min}+(l-1)\delta\big) \Big| \geq \delta(\big|l^*-l\big|-1\big),$$ which implies the following by Assumption \ref{assn:ucb-rc}: $$\Delta_{1,l} \geq a_1 \big||l^*-l|-1\big|^q\delta^q.$$ By using this result and Claim \ref{claim:ucb-rc}, we have the following bound:
    \begin{align*}
        \E\Big[\sum_{l\in B}\Delta_{1,l}T_{1,l}\Big(\frac{2\tau}{\mu_*}\Big)\Big] &\leq \alpha\sum_{l=1}^{L(\delta)}\frac{3z^2(\beta)\mathbb{C}_{1,l}r_*^2\log(\tau)}{a_1 (l\cdot \delta)^q} + O(\log(\tau)|B|), \\
         & \leq 3\alpha z^2(\beta)r_*^2\frac{\log(\tau)}{a_1\delta^q}\mathbb{C}^\star\sum_{l=1}^\infty\frac{1}{l^q} + O(\log(\tau)|B|),
    \end{align*}
    \noindent where $\mathbb{C}_{1,l}\leq \mathbb{C}^\star$ for all $l\in L(\delta)$ with $$\mathbb{C}^\star = \Big(\frac{\E[X_{1,1}^2]}{\mu_*^2}+\frac{\E[R_{1,1}^2]}{\epsilon^2\big(\E[R_{1,1}]\big)^2}\Big).$$ Therefore, we have the following bound:
    \begin{equation}
        \E\Big[\sum_{l\in B}\Delta_{1,l}T_{1,l}\Big(\frac{2\tau}{\mu_*}\Big)\Big] \leq 3\alpha z^2(\beta)r_*^2\frac{\log(\tau)}{a_1\delta^q}\frac{\mathbb{C}^\star q}{q-1} + O(\log(\tau)|B|).
        \label{eqn:reg-b}
    \end{equation}
    
    \item For $l\in D$, we have $\Big| \big(t_{min}+(l^*-1)\delta\big) - \big(t_{min}+(l-1)\delta\big) \Big| \geq \delta_0/2$, hence the following holds by Assumption \ref{assn:ucb-rc}: $$\Delta_{1,l} \geq a_1(\delta_0/2)^q.$$ Consequently, we have the following upper bound:
    \begin{equation}
        \E\Big[\sum_{l\in D}\Delta_{1,l}T_{1,l}\Big(\frac{2\tau}{\mu_*}\Big)\Big] \leq 3\alpha z^2(\beta)r_*^2\mathbb{C}^\star\frac{\log(\tau)L(\delta)}{a_1(\delta_0/2)^q}+O(\log(\tau)|D|).
        \label{eqn:reg-d}
    \end{equation}
\end{itemize}

Substituting the results in \eqref{eqn:reg-a}, \eqref{eqn:reg-b} and \eqref{eqn:reg-d} into \eqref{eqn:reg-c}, we obtain the following upper bound:
\begin{equation}
    {\tt REG}_{\tt \pi^C}(\tau) \leq a_2\frac{2\mu}{\mu_*}\tau(3\delta)^q + \frac{3\alpha q}{q-1}\mathbb{C}^\star z^2(\beta)r_*^2\frac{\log(\tau)}{a_1\delta^q} + O(\log(\tau)L(\delta)).
\end{equation}
With the choice $\delta^q = \sqrt{\log(\tau)/\tau}$, we have $\log(\tau)L(\delta) = o(\sqrt{\tau\log(\tau)})$. Therefore, $$\lim\sup_{\tau\rightarrow}\frac{{\tt REG}_{\tt \pi^C}(\tau)}{\sqrt{\tau\log(\tau)}} \leq 6^q a_2\frac{ \mu}{\mu_*} + \frac{3\alpha q}{a_1(q-1)}\mathbb{C}^\star z^2(\beta)r_*^2.$$

\vskip 0.2in
\bibliography{main}

\end{document}